\renewenvironment{description}[1][0pt]
  {\list{}{\labelwidth=0pt \leftmargin=#1
   }}
  {\endlist}
\tikzstyle{vertex}=[circle, draw, inner sep=0pt, minimum size=6pt]
\title{Aspects of Artificial Intelligence: Transforming Machine Learning Systems Naturally}
\author{Xiuzhan Guo\thanks{xiuzhan@gmail.com}}
\date{}
\newcommand{\two}{\ensuremath{{\hbox{\textrm 2}\kern-.25em
        \hbox{\vrule height1.5ex width 0.4pt depth -.2ex}}\kern.2em}\xspace}
\newcommand{\three}{\ensuremath{{\hbox{\textrm 3}\kern-.25em
        \hbox{\vrule height1.5ex width 0.4pt depth -.0ex}}\kern.2em}\xspace}
\newtheorem{theorem}{Theorem}[section]    
\newtheorem{corollary}[theorem]{Corollary}   
\newtheorem{preremark}[theorem]{Remark}   
\newtheorem{prexample}[theorem]{Example}   
\newtheorem{proposition}[theorem]{Proposition}
\newtheorem{definition}[theorem]{Definition}
\newenvironment{remark}{\begin{preremark}\rm}{\end{preremark}}
\newenvironment{example}{\begin{prexample}\rm}{\end{prexample}}
\providecommand{\keywords}[1]
{	
 \noindent \textbf{\textit{Keywords --}} #1
}
\begin{document}

\maketitle

\begin{abstract}
In this paper, we study the machine learning elements which we are interested in together
as a machine learning system, consisting of a collection of machine learning elements and a collection of relations between
the elements. The relations we concern are algebraic operations, binary relations, 
and binary relations with composition that can be reasoned categorically.
A machine learning system transformation between two systems is a
map between the systems, which preserves the relations we concern. 
The system transformations given by quotient or clustering, representable functor, and Yoneda embedding 
are highlighted and discussed by machine learning examples. 
An adjunction between machine learning systems, a special machine learning system transformation loop, provides the optimal way of solving problems.
Machine learning system transformations are linked and compared by their maps at 2-cell, natural transformations. 
New insights and structures can be obtained from universal properties and algebraic structures given by monads, which are generated from
adjunctions. 
\end{abstract}


\vspace{1cm}

{\footnotesize \keywords{
Machine Learning, Machine Learning System, Machine Learning System Transformation, Binary Relation, Directed Graph, Category, Functor, Transformation, Quotient, Adjunction, Monad, Descent, Yoneda Embedding
}}
\pagenumbering{arabic}

\vspace{1cm}

Let's begin with the following sentences:
\begin{itemize}
\item
Observed bird nests in trees enduring heavy winds during my morning walking;
\item
My cats dash straight to me at the moment they see me taking their lickable treats from the box, without worrying about the optimal path;
\item
We steer our cars instinctively, without calculating or measuring the exact degrees to turn the steering wheel;
\item
Some people enjoy solving their problems by defining objective functions, constraints, and searching 
for the optimal solutions;
\item
$\cdots$.
\end{itemize}
People might have different feelings and thoughts after reading the words in these sentences and combining the meanings of words together.
The collection $W_0$ of the words in the sentences is discrete.  We connect the words by their interconnections in the sentences to obtain a directed graph $W_1$ and then understand the meanings and insights of these sentences by transforming $W_1$ from one state to another.

Assume that machine learning (ML) aims to learn from data, grow, and perform a class of tasks without explicit instructions.
ML involves a complex and connected collection of ML elements, such as, 
data, algorithms, models, evaluation metrics, monitoring and maintenance, etc.
The objective of this paper is to study ML elements together as an {\em ML system} and map ML systems by {\em ML system transformations}.

\section{Introduction}\label{sect:introduction}
In the age of artificial intelligence, data is from various platforms with multiple formats, noisy, and keeps changing continuously,
which results in tremendous potential for dynamic relationships.
Data is not static but dynamic. Machine learning (ML) elements and systems, driven by data, 
producing new data, must be robust enough to capture the changes.

Natural numbers are not isolated but connected by their mathematical operations, e.g., $+$, $-$, $\times$, and $\div$, so that,
natural numbers can be used not only to count but also to solve real life problems. The set of all natural numbers, along
with the operations, forms an algebraic system and so one can study numbers and their relations together by their properties and extend the system 
to more complex system and
solve more complex problems naturally.
Hence the relations (operations) between natural numbers and their properties make more sense than the isolated numbers.

In ML, algorithms learn from the data one inputs and can only learn effectively if the data is in the format required, clean and complete.
ML models are driven by data and on the other hand, ML models generate data usually.
Real world datasets are usually with multiple formats, from multiple silos. These datasets are prepared and transformed to train their ML models.
Therefore, not only are data and ML models connected but also are there relations among datasets and between ML models.
Hence ML elements are not isolated but connected together with certain structures,
e.g., operations, relations, and compositional relations. 
These relations make more sense than the isolated ML elements,
similar to the natural numbers.

Data changes constantly. ML models, driven by data, are tested and retrained to ensure them remain accurate, relevant, and effective during data changing.
So all elements and their relations in an ML system must be modified together coherently.
All ML elements
and the relations between the elements, which we concern, must be viewed together to form an {\em ML system}.

Real world problems can be solved by modelling them mathematically and implementing the models into computational tools.
Some problems are challenges in one mathematical area but can be solved through mapping them to another mathematical environment, e.g.,
mapping topological problems and number theory problems to algebraic settings.
Similarly, some problems might be easier to be solved in one ML system than another. To find a reliable ML system for certain class of problems,
we need to compare some ML systems and map one system to another system without breaking the existing relations.
Therefore, we also consider how to map one ML system to another.

In Section \ref{sect:mlsystemtransformation}, we first consider the collection ${\bf M}$ of all ML elements and a collection $R$ of their relations we concern 
together as an {\em ML system} $({\bf M},R)$.
Then we link and compare ML systems using relations preserving maps, {\em ML system transformations}.
In this section, the ML element relations we are interested in are binary algebraic operation, directed graph, directed graph with identities and composition.
Clustering aims to group a class of objects in such a way that objects in the same cluster are more similar to each other.
Clustering amounts to partitioning or an equivalent relation on the class of objects or a surjective map from the class 
to the quotient space of the equivalent relation. If clustering is {\em compatible} with the relations of an ML system,
then we have a quotient ML system by identifying elements in the same cluster and 
a quotient ML system transformation from the original ML system to its quotient ML system functorially.

Sets are concrete mathematical objects.
Given an ML system $({\bf M},R)$ and an ML element $M\in {\bf M}$,
we have a map $\textrm{hom}(-,M)$ from $({\bf M},R)$ to ${\bf Set}$, sending $X\in {\bf M}$ to the set of all edges from $X$ to $M$.
The corresponding, sending $M$ to $\textrm{hom}(-,M)$, is called {\em Yoneda embedding}.
The $\textrm{hom}(-,M)$ is set valued and determined totally by $M$, called {\em reprsentable} functor/transformation when $({\bf M},R)$
has composition and identities. 
These representable functors/transformations, e.g., $\textrm{hom}(-,M)$, provide the optimal ways to understand elements, e.g., $M$ of ML system $({\bf M},R)$,
by their representable functors in the category of all set valued functors (presheaves),
through Yoneda embedding.
We shall show that Yoneda embedding, along with Yoneda lemma, plays a crucial role in ML system transformation and representation.

An ML system can be transformed by different ways. Let's collect all ML system transformations between two ML systems together. Now it is natural to ask what the relations are between these transformations
and how to compare these transformations. In Section \ref{sect:transformcompare}, the questions are answered categorically:
The structure-preserving maps between ML system transformations are categorical natural transformations and these natural transformations are flatted to a preorder on 
ML system transformations.

ML intends to understand and summarize the existing knowledge from data to grow, predict, and create (new) insights from data.
We employ category theory to format and reason ML systems and ML system transformations naturally.
An ML system transformation maps problems from one ML system to another where the problems mapped are easier to solve and then
the solutions are mapped back to the original system. Hence an ML transformation loop is needed.
Categorical concept ``adjunction" describes the most efficient solution to problems involving transforming problems and solutions naturally.
A monad $T$ on an ML system $({\bf M},R)$, is an endo system transformation $T:({\bf M},R)\rightarrow ({\bf M},R)$ with monoid like structure.
$T$ acts on $({\bf M},R)$ and outputs algebraic structures, $T$-algebras, to $({\bf M},R)$
and so $({\bf M},R)$ obtains algebraic structures through $T$.
An adjunction gives rise to a monad and every monad arises in this way. 
Also, andjunctions can be defined by the universal property that confirms 
the existence and uniqueness of the gap map/link, which can be used to link ML elements.
In Section \ref{sect:mladjunction}, 
the adjunctions between ML systems we highlight include Yoneda embedding, monad algebras, free structures, change of base functors. 

Finally, we complete the paper with our concluding remarks
in Section \ref{sect:conclusions}.
 
\section{Machine Learning Systems and Transformations}\label{sect:mlsystemtransformation}
ML elements are the foundational components and blocks that can be used to build ML systems. 
Essential  ML elements include data, features, algorithms, models, performance metrics, validation, testing, deployment, outputs, etc.
Let ${\bf M}$ be a collection of ML elements one concerns. 
The elements in ${\bf M}$ are not isolated but connected by the collection of relations between the elements.
For instance, the collection of relations can be specified by algebraic operations, certain dependencies and relations on ${\bf M}$ so that the elements
work together to enable the ML systems to learn, grow, and perform tasks. 
A collection ${\bf M}$ of ML elements and a collection of relations we concern form an {\em ML system}.

Data is flowing. The elements and the relations in an ML system, driven by data, must be updated and transformed to 
fit the present setting dynamically.
The map, preserving the relations concerned, between ML systems is {\em an ML system transformation}.

\begin{definition}\label{def:transformation}\textrm{(Machine learning system and transformation)}
\begin{enumerate}
\item
An {\em ML system} $({\bf M},R)$ consists of a collection ${\bf M}$ of ML elements and 
a collection $R$ of relations between the elements.
Write $e_1Re_2$ or $(e_1,e_2)\in R$ or $e_1\rightarrow e_2$ if $e_1$ and $e_2$ are related by $R$, for $e_1,e_2\in {\bf M}$.
\item 
Let $({\bf M}_1,R_1)$ and $({\bf M}_2,R_2)$ be ML systems.  An {\em ML system transformation} $T$ from $({\bf M}_1,R_1)$ to $({\bf M}_2,R_2)$ is a function 
$T:{\bf M}_1\rightarrow {\bf M}_2$ that preserves the collection $R_1$ of relations: $e_1R_1e_2$ implies $T(e_1)R_2T(e_2)$, denoted by $T:({\bf M}_1,R_1)\rightarrow ({\bf M}_2,R_2)$.
\end{enumerate}
\end{definition}

\begin{remark}\label{remark:transformation}
Let $T:({\bf M}_1,R_1)\rightarrow ({\bf M}_2,R_2)$ be an ML system transformation. 
\begin{enumerate}
\item
If $R_1$ is given a (partial) binary operation, e.g., table join on a collection of data tables, then $R_1$ can be viewed as a ternary relation.
Assume that $R_1$ is given by a (partial) binary operation $\circ$ and $R_2$ by $\star$ respectively,
$T:({\bf M}_1,R_1)\rightarrow ({\bf M}_2,R_2)$ can be considered as a homomorphism, a structure preserving function $T:{\bf M}_1\rightarrow {\bf M}_2$,
namely, $T(e_1\circ e_2)= T(e_1)\star T(e_2)$.
\item
If ML systems $({\bf M}_1,R_1)$ and $({\bf M}_2,R_2)$ have only some general relations, e.g., dependencies, similarities, implications, etc., between their elements, then these ML systems can be modelled 
by (multi)directed graphs and so the ML transformations between the ML systems are given by directed graph homomophisms, namely, a function $T: {\bf M}_1\rightarrow {\bf M}_2$
that takes each edge (relation) $e_1\rightarrow e_2$ in ${\bf M}_1$ to an edge (relation) $T(e_1)\rightarrow T(e_2)$ in ${\bf M}_2$.
\item
If ML systems $({\bf M}_1,R_1)$ and $({\bf M}_2,R_2)$ have the transitive and associative relations and identity relations, then both $({\bf M}_1,R_1)$ and $({\bf M}_2,R_2)$ can be modelled
by directed graphs with identities and composition, which are {\em categories}, a general mathematical structure. 
An ML system transformation $T:({\bf M}_1,R_1)\rightarrow ({\bf M}_2,R_2)$ is a {\em functor}. Hence ML systems can be reasoned categorically. 
See Appendix for the basic notations, concepts, and results of relation, directed graph, and category theory.
\end{enumerate}
\end{remark}

Algebraic or graph transformation between ML systems that have algebraic operations or binary relations, can be factored as a surjective to a quotient space, followed by an injective transformation 
by the similar process in \cite{ghlr} at the set level.
An ML system that has a compositional relation and forms a category can be quotiented by either a congruence equivalence relation on its hom sets or an equivalence relation on objects.
See Subsection \ref{subsect:quotientcat} for the quotient category details.

Let $({\bf M}, R)$ be an ML system and $\rho$ an equivalence relation on ${\bf M}$.
$\rho$ is {\em compatible} with $R$ if $R$ can be induced to the equivalence relation $R_{\rho}$ on ${\bf M}/\rho$, 
namely, $e_1Re_2\Rightarrow [e_1]_{\rho}R_{\rho}[e_2]_{\rho}$ is well-defined.
\begin{proposition}\label{prop:quotient}
Let $({\bf M}_1, R_1)$ be an ML system and $\rho$ an equivalence relation on ${\bf M}_1$.
Suppose that $\rho$ is compatible with $R_1$. 
\begin{enumerate}
\item
ML system $({\bf M}_1, R_1)$ is transformed to its quotient ML system $({\bf M}_1/\rho, {R_1}_{\rho})$ by 
the obvious canonical ML system transformation
$$Q_{\rho}:{\bf M}_1\rightarrow {\bf M}_1/\rho$$
sending $f:e_1\rightarrow e_2$ to $[f]_{\rho}:[e_1]_{\rho}\rightarrow [e_2]_{\rho}$.
\item
If $\sigma$ is an equivalence relation on ${\bf M}_1$ and compatible with $R_1$ such that $\rho\subseteq \sigma$, then there is a unique surjective
ML system transformation $(\rho\leq\sigma)^*: {\bf M}_1/\rho\rightarrow {\bf M_1}/\sigma$ such that
$$\xymatrix@C=1.6pc{
& {\scriptstyle {\bf M}_1}\ar[dl]_{Q_{\rho}} \ar[dr]^{Q_{\sigma}}\\
{\scriptstyle {\bf M}_1/\rho} \ar@{..>}[rr]^{(\rho\leq\sigma)^*} && {\scriptstyle {\bf M}_1/\sigma}
}$$
commutes.
\item
Each ML system transformation $T:({\bf M}_1,R_1)\rightarrow ({\bf M}_2,R_2)$, which preserves the congruence equivalence relation $\rho$,
that is, $(f,g)\in \rho$ implies $T(f)=T(g)$,
factors through $Q_{\rho}$, followed by a unique induced ML system transformation $T_{\rho}:{\bf M}_1/\rho\rightarrow {\bf M}_2$
$$\xymatrix@C=1.6pc{
{\scriptstyle {\bf M}_1}\ar[rr]^T\ar[dr]_{Q_{\rho}} && {\scriptstyle {\bf M}_2}\\
& {\scriptstyle {\bf M}_1/\rho} \ar[ur]_{T_{\rho}}
}$$
\item
If $T(\rho)\subseteq \sigma$, then there is a unique ML system transformation $T^*:{\bf M}_1/\rho\rightarrow {\bf M}_2/\sigma$
such that
$$\xymatrix@C=1.6pc{
{\scriptstyle {\bf M}_1}\ar[r]^T\ar[d]_{Q_{\rho}} & {\scriptstyle {\bf M}_2}\ar[d]^{Q_{\sigma}}\\
{\scriptstyle {\bf M}_1/\rho} \ar@{..>}[r]^{T^*}  &  {\scriptstyle {\bf M}_2/\sigma}\\
}$$
commutes. If $T$ is surjective and so is $T^*$.
\end{enumerate}
\end{proposition}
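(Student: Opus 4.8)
The plan is to treat part (3) as the central universal property of the quotient and to obtain parts (2) and (4) as instances of it, with part (1) following immediately from the definition of compatibility. For part (1), I would define $Q_{\rho}$ on elements by $e\mapsto [e]_{\rho}$ and on relations/edges by $f\colon e_1\to e_2 \mapsto [f]_{\rho}\colon [e_1]_{\rho}\to[e_2]_{\rho}$. That $Q_{\rho}$ is an ML system transformation is exactly the compatibility hypothesis: $e_1 R_1 e_2$ implies $[e_1]_{\rho}(R_1)_{\rho}[e_2]_{\rho}$, which is the defining clause of the induced relation $(R_1)_{\rho}$ on $\mathbf{M}_1/\rho$. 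Surjectivity of $Q_{\rho}$ is automatic, since every class $[e]_{\rho}$ has the representative $e$.

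For part (3), I would define $T_{\rho}([e]_{\rho})=T(e)$ on elements, and likewise on edges. The key step, and the main obstacle, is \emph{well-definedness}: if $[e]_{\rho}=[e']_{\rho}$ then $(e,e')\in\rho$, and the stated hypothesis $(f,g)\in\rho\Rightarrow T(f)=T(g)$ forces $T(e)=T(e')$, so $T_{\rho}$ does not depend on the chosen representative. Next I would verify that $T_{\rho}$ preserves relations: given $[e_1]_{\rho}(R_1)_{\rho}[e_2]_{\rho}$, I unwind the definition of the induced relation to representatives with $e_1 R_1 e_2$, apply that $T$ is a transformation to get $T(e_1) R_2 T(e_2)$, and read this back as $T_{\rho}([e_1]_{\rho}) R_2 T_{\rho}([e_2]_{\rho})$. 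Commutativity $T_{\rho}\circ Q_{\rho}=T$ holds by construction, and uniqueness is automatic because $Q_{\rho}$ is surjective: any map agreeing with $T$ after precomposition with $Q_{\rho}$ is pinned down on every class $[e]_{\rho}=Q_{\rho}(e)$.

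For parts (2) and (4), I would apply part (3) to suitable composites rather than repeating the argument. For (2), apply part (3) to $T=Q_{\sigma}\colon \mathbf{M}_1\to \mathbf{M}_1/\sigma$: the needed hypothesis $(f,g)\in\rho\Rightarrow Q_{\sigma}(f)=Q_{\sigma}(g)$ is precisely $\rho\subseteq\sigma$, so part (3) produces $(\rho\le\sigma)^{*}:=(Q_{\sigma})_{\rho}$ making the triangle commute, and uniqueness transfers. For (4), apply part (3) to the composite $Q_{\sigma}\circ T\colon \mathbf{M}_1\to \mathbf{M}_2/\sigma$: the hypothesis $T(\rho)\subseteq\sigma$ gives $(f,g)\in\rho\Rightarrow T(f)\,\sigma\,T(g)\Rightarrow [T(f)]_{\sigma}=[T(g)]_{\sigma}$, so $Q_{\sigma}\circ T$ preserves $\rho$ and factors uniquely through $Q_{\rho}$ as the desired $T^{*}$, with the square commuting. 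Finally, for the surjectivity claims: since $Q_{\sigma}$ is surjective and $T$ is assumed surjective, the composite $Q_{\sigma}\circ T=T^{*}\circ Q_{\rho}$ is surjective, which forces $T^{*}$ to be surjective; the same observation applied to $Q_{\sigma}=(\rho\le\sigma)^{*}\circ Q_{\rho}$ yields surjectivity of $(\rho\le\sigma)^{*}$ in (2).

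The main obstacle throughout is well-definedness of the induced maps on equivalence classes, i.e.\ independence from the choice of representatives, which in each part is exactly what the stated hypothesis supplies ($\rho$-preservation in (3), $\rho\subseteq\sigma$ in (2), $T(\rho)\subseteq\sigma$ in (4)). Once well-definedness is secured, relation-preservation rides on compatibility together with the transformation property of $T$, commutativity is true by construction, and uniqueness is forced by the surjectivity of $Q_{\rho}$.
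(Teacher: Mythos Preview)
Your proof is correct and follows the standard route: establish the universal property (part~3) by checking well-definedness via the $\rho$-preservation hypothesis, then derive parts~(2) and~(4) as special cases by choosing $T=Q_{\sigma}$ and $T=Q_{\sigma}\circ T$ respectively, with surjectivity forced by the surjectivity of $Q_{\rho}$ and $Q_{\sigma}$.

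The paper itself states Proposition~\ref{prop:quotient} without proof; it moves directly from the statement to Example~\ref{item:mlsystrans_quotient}. The surrounding text merely remarks that such factorizations can be carried out ``by the similar process in~\cite{ghlr} at the set level'' and refers the reader to Subsection~\ref{subsect:quotientcat} for the analogous construction in the categorical setting (where the congruence condition guarantees that composition descends to equivalence classes). So there is nothing in the paper to compare your argument against beyond this pointer; your write-up is exactly the routine verification the authors leave implicit, organized in the natural way of proving the universal property once and reusing it.
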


\begin{example}\label{item:mlsystrans_quotient}{\em Cluster and represent ML elements from multiple points of view:}
Duplicates or similar ML elements in an ML system are often clustered and quotiented to a new ML system.
After transforming an ML system by the clusters, we want to represent each equivalence class (cluster) by an ML element on each cluster.

For example, given a sequence 
$$S=\{s_1,\cdots,s_n\textrm{ of temporal transaction records}\},$$ 
to increase the predictability of the next record or a few next records, one may
group the records and compute the average of each cluster, for instance, monthly cluster. Hence the equivalence relation $\rho$ on $S$ is given by:
$$(s_i,s_j)\in\rho\Leftrightarrow s_i\textrm{ and }s_j \textrm{ were transacted at the same month}.$$ 
The representation function on clusters, e.g., the average $avg_{\rho}$ can be chosen to represent each cluster. Therefore we have:
$$S\stackrel{Q_{\rho}}{\longrightarrow} S/\rho \stackrel{avg_{\rho}}{\longrightarrow}({\bf R},\preceq),$$
where $\preceq$ is a partial order on ${\bf R}$ and compatible with the represent function $avg$ to induce $avg_{\rho}$.

The uncertainty/randomness of an ML element can be viewed from different points of view by clustering and representing.
The outcomes of a variable, e.g., coin flip, can be either any point from $\{$head, tail$\}$ at the point level or a certain set $\{$head, tail$\}$ at the distribution level.

Let $\Omega$ be a sample space of a random variable $X$ and $bin$ an equivalence relation on $\Omega$. $bin$ clusters $\Omega$
to a family of disjoint subsets. If $dist:\Omega \rightarrow V$ is a function that is compatible with $bin$ and represents the clusters of $bin$, then there is a transformation:
$$\Omega \stackrel{Q_{bin}}{\longrightarrow}\Omega/bin\stackrel{dist}{\longrightarrow} V.$$

Since equivalence relations (clustering or surjective maps) have the partial order $\leq$ given by $\subseteq$, we have the following commutative diagrams:
$$\xymatrix@C=1.6pc{
{\scriptstyle S} \ar[rr]^{Q_{\rho}}\ar@{=}[d] && {\scriptstyle S/\rho} \ar[d]^{(\rho\leq\sigma)^*}\\
{\scriptstyle S} \ar[rr]^{Q_{\sigma}} && {\scriptstyle S/\sigma} \\
}$$
and
$$\xymatrix@C=1.6pc{
{\scriptstyle \Omega} \ar[rr]^{Q_{\rho}}\ar@{=}[d] && {\scriptstyle \Omega/\rho} \ar[d]^{(\rho\leq\sigma)^*}\\
{\scriptstyle \Omega} \ar[rr]^{Q_{\sigma}} && {\scriptstyle \Omega/\sigma}\\
}$$
\end{example}

\begin{example}\label{item:mlsystrans_nlp}{\em Word2vec, Word2fun:}
Word embedding intends to map all words in a large corpus to a vector space,
with the relations between words, e.g., semantic similarity, syntactic similarity,
contextual similarity, analogical relationships, etc., being preserved.
Word2vec \cite{mccd,msccd} is a popular machine learning technique for learning word embeddings from a large text corpus.
Let ${\bf W}_0$ be the set of all words in a corpus. Applying Word2vec to ${\bf W}_0$, one obtains a function
$w2v_0:{\bf W}_0\rightarrow {\bf R}^n$. Since there exist the relations, e.g., similar meaning, between the words in ${\bf W}_0$, 
${\bf W}_0$ is enriched to a directed graph ${\bf W}_1$ and $w2v_0$ is lifted to $w2v:{\bf W}_1\rightarrow {\bf R}^n$,
where $n$ is a natural number and ${\bf R}^n$ is ordered partially by the {\em closeness} (neighbourhood). 
If the words that have similar meaning are closer in the real vector space ${\bf R}^n$, then one has an ML system transformation:
$$w2v:{\bf W}_1\rightarrow {\bf R}^n$$
and so NLP problems in ${\bf W}_1$ might be solved in a vector space ${\bf R}^n$ using the structures and properties of ${\bf R}^n$ 
through the transformation $w2v$. For example, 
Jiang et al. \cite{jav} showed that the semantic independence structure of language
are naturally represented by {\em partial orthogonality} in vector space ${\bf R}^n$.
Since ${\bf R}^n$ is a {\em linear} vector space and ${\bf W}_1$ might have more complex word relations that are difficult to represent in 
${\bf R}^n$,  Mani \cite{m} introduced multi-vectors and geometric algebra to embed words in ${\bf W}_1$.

Sets are among the most fundamental objects in mathematics and many structures, e.g., graphs, algebras, topologies, geometries, can associate with them.
Yuan \cite{yuan} considered set valued  (representable) functors as tasks.
Assume that ${\bf W}_1$ is a directed graph. We first transform ${\bf W}_1$ using $path$ functor to form a category $path({\bf W}_1)$ and 
transform $path({\bf W}_1)$ to representable functors in the category of presheaves using Yoneda embedding:
$$\xymatrix@C=1.6pc{
{\scriptstyle {\bf W}_1} \ar[r]^<<<<{\scriptstyle path} & {\scriptstyle path({\bf W}_1)} \ar[r]^<<<<{\scriptstyle Q_{\sim}} & {\scriptstyle path({\bf W}_1)/\sim} \ar[r]^<<<<Y & 
{\scriptstyle {\bf Set}^{(path({\bf W}_1)/\sim)^{\textrm{op}}}}\\
}$$
where $wordchain_1\sim wordchain_2$ if both have the similar meaning. See \ref{subsect:yoneda} for the descriptions of presheaves, representable functors, and Yoneda embedding.

For example, the discrete subset $W_0=\{\textrm{apples,\;eat,\;i,\;like,\;love}\}\subseteq {\bf W}_0$  is a set of isolated words but can be directed graph enriched by the sentences of a corpus:
$$\xymatrix@R=1pc@C=1pc{
{\scriptstyle \textbf{W}}_0\ar@{}[dd]|{\rotatebox{-90}{$\supseteq$}} \ar@{}[rrr]|{\subseteq}&&&& {\scriptstyle {\bf W}}_1 \ar@{}[dd]|{\rotatebox{-90}{$\supseteq$}} \\
&&&&\\
 && && {\scriptstyle  \textrm{love}}  \ar[dr]\\
{\scriptstyle W_1} \ar@{}[rrr]|{=} &&& {\scriptstyle  \textrm{I}} \ar[r] \ar[ur]  \ar[dr]&  {\scriptstyle  \textrm{eat}}  \ar[r]&  {\scriptstyle  \textrm{apples}}\\
&& && {\scriptstyle  \textrm{like}}  \ar[ur]\\
}$$

Applying $path$ to $W_1$,
we have the path edges from I to apples:
$$\xymatrix@R=1pc@C=1pc{
 {\scriptstyle  \textrm{I}} \ar@/^2pc/[rrrrrr]^{\scriptstyle  \textrm{[(I,love),(love,apples)]}} \ar@/_2pc/[rrrrrr]_{\scriptstyle  \textrm{[(I,like),(like,apples)]}}
\ar[rrrrrr]|{\scriptstyle  \textrm{[(I,eat),(eat,apples)]}} &&&&&& {\scriptstyle  \textrm{apples}}
}$$
Quotienting out duplicates, we have:
$$\xymatrix@R=1pc@C=1pc{
 {\scriptstyle  \textrm{I}} \ar@/^1.5pc/[rrrrrr]^{\scriptstyle  \textrm{[(I,love),(love,apples)]}} \ar@/_1.5pc/[rrrrrr]_{\scriptstyle  \textrm{[(I,eat),(eat,apples)]}}
 &&&&&& {\scriptstyle  \textrm{apples}}
}$$

Clustering/Quotient provides multiple points of view on chains of words at different layers.
For instance, the following quotient system transformations given by $\rho_1$ and $\rho_2$:
$$\xymatrix@R=0.8pc@C=0.8pc{
& {\scriptstyle  \textrm{GalaApple}} & {\scriptstyle \textrm{Honeycrisp}}\\
{\scriptstyle \textrm{I}} \ar[r] & {\scriptstyle \textrm{love}} \ar[d] \ar[dr] \ar[u]  \ar[ur] \ar[r] & {\scriptstyle \textrm{Plantain}}\\
& {\scriptstyle \textrm{SmithApple}}& {\scriptstyle \textrm{RedBanana}} \\
& \ar[d]^{\rho_1}\\
& & {\scriptstyle \textrm{apples}}\\
{\scriptstyle \textrm{I}} \ar[r] & {\scriptstyle \textrm{love}}  \ar[dr] \ar[ur] & \\
& & {\scriptstyle \textrm{bananas}} \\
& \ar[d]^{\rho_2}\\
& {}\\
{\scriptstyle \textrm{I}} \ar[r] & {\scriptstyle \textrm{love}}  \ar[r] & {\scriptstyle \textrm{fruits}} \\
}$$
show how the chains of words are aggregated by clustering/ equivalence relations $\rho_1$ and $\rho_2$, functorially.

${\bf Set}^{(path({\bf W}_1)/\sim)^{\textrm{op}}}$ (the category of presheaves) has more complicated structure than ${\bf R}^n$ and behaves like sets.
Given two representable functors $F_1$ and $F_2$, represented by words $w_1,w_2$ in ${\bf W}_1$, respectively, by Yoneda Lemma  $F_1$ and $F_2$ are the same up to isomorphism
if and only if $w_1$ and $w_2$ are the same.  The process of representing words into their presentable functors, is functorial: for each word corresponding $f:{\bf W}_1\rightarrow {\bf W_2}$, 
$$\xymatrix@C=1.6pc{
{\scriptstyle {\bf W}_1} \ar[r]^<<<<{path} \ar[d]_f & {\scriptstyle path({\bf W}_1)} \ar[r]^<<<<{Q_{\sim}} \ar[d]\ar[d]_{path(f)}& {\scriptstyle path({\bf W}_1)/\sim} \ar[r]^<<<<Y \ar[d]_{path(f)^*}
   & {\scriptstyle {\bf Set}^{(path({\bf W}_1)/\sim)^{\textrm{op}}}}\ar[d]^{Y(path(f)^*)}\\
{\scriptstyle {\bf W}_2} \ar[r]^<<<<{path} & {\scriptstyle path({\bf W}_2)} \ar[r]^<<<<{Q_{\sim}} & {\scriptstyle path({\bf W}_2)/\sim} \ar[r]^<<<<Y 
  & {\scriptstyle {\bf Set}^{(path({\bf W}_2)/\sim)^{\textrm{op}}}}\\
}$$
is commutative, forgetting the categorical composition.
Therefore NLP problems in ${\bf W}_1$ can be transformed to the category of presheaves naturally, where word relations are enriched by set set valued functors functorially.
\end{example}

\begin{example}\label{data22model}{\em Data and model loop:}
Assume that ML models are driven by data. Given a data $d$, one applies ML system transformations, e.g., 
clustering or quotienting, summarizing, aggregating, etc., on the data $d$ to learn an ML model $m$, which builds the relations between inputs and outputs, 
represents data $d$ and behaves like a mathematical function.
On the other hand, given an ML model $m$, it can be thought as a colimit of repsentable functors and
produces new data by inputting data, modelling the relations between inputs and outputs.
Let {\bf Data} be a collection of data sets and ${\bf Model}$ a collection of ML models. Then one has maps 
$$M:{\bf Data}\rightarrow {\bf Model}$$
and 
$$P:{\bf Model}\rightarrow {\bf Data}.$$
${\bf Data}$ has a collection of data relations 
$$R_D=\{\textrm{join, combine, select, merge, join conditions,}$$
$$\textrm{ match, similarity,}\cdots \}$$
and ${\bf Model}$ a collection of model relations 
$$R_M=\{\textrm{compose, combine, match, similarity,}\cdots\},$$
respectively. Choose proper operations or relations $R_d\in R_D$  and $R_m\in R_M$, based on the problem one wants to solve, so that
$$\xymatrix@R=1pc@C=1pc{
{\scriptstyle ({\bf Data},R_d)} \ar@/_1pc/[rr]_{M} && {\scriptstyle ({\bf Model},R_m)}\ar@/_1pc/[ll]_P
}$$
becomes an ML system transformation loop. We shall study when the loop becomes an adjunction and generates new structures in Subsection \ref{subset:optadj}.
\end{example}

ML systems and transformations arise everywhere. More examples of ML systems and transformations are listed in the following Example \ref{examp:mlsystrans}.
\begin{example}\label{examp:mlsystrans}
\begin{enumerate}
\item\label{item:mlsystrans_matchmerge}
{\em ML entity match and merge:}
Assume that ${\bf E}$ is a set of ML entity records, e.g., data sets, data workflows, etc.
An ML entity resolution system $({\bf E},\approx, \langle\;\rangle)$ \cite{BGSWW, gbkbl} consists of a set of ML entity records ${\bf E}$,
a {\em match} function $\approx:{\bf E}\times {\bf E}\rightarrow \{true,false\}$, where $\approx(e_1, e_2)=true$ means that $e_1$ matches $e_2$, modelling match relations.
For instance, $e_1$ is similar to $e_2$ if $e_1$ and $e_2$ have an overlap. $\langle e_1, e_2\rangle$ combines $e_1$ and $e_2$ together by identifying their overlap when 
$e_1$ matches $e_2$.
Then $({\bf E},\approx, \langle\;\rangle)$ gives rise to an algebraic system, a partial groupoid $({\bf E}, \circ)$,
where $e_1\circ e_2= \langle e_1,e_2\rangle$ when $\approx(e_1,e_2)=true$ and undefined, otherwise.
Clearly, an ML entity resolution system $({\bf E},\approx, \langle\;\rangle)$ leads to an ML system $({\bf E},\circ)$ with a partial algebraic operation $\circ$.

\item\label{item:mlsystrans_zoomdp}
{\em Zoom data provenance by clustering:}
Recall that data provenance aims to provide a historical record of data origins and transformations associated with data.
Data provenance knowledge can be represented as a collection ${\bf D}$ of data elements and a collection of relations $L$ between the elements \cite{abcdkv} and so it forms 
an ML system $({\bf D},L)$. Data element relations can be rolled up or down according to the data hierarchy, by using equivalence relations $\tau$ and so data
provenance is zoomed, aggregated, queried, and visualized at multiple levels driven by $\tau$:
$$({\bf D},L) \stackrel{Q_{\tau}}{\longrightarrow}({\bf D}/\tau,L/\tau)\stackrel{query}{\longrightarrow}{\bf D}.$$

\item\label{item:mlsystrans_world}
{\em World2vec, World2fun:}
Not only are words connected but also is everything in the World linked, interactive and dynamic.
Let ${\cal W}$ be the collection of elements (things) in the World and ${\cal R}$ the collection of relations one concerns, between the things.
Similar to Word2vec, some subsets of ${\cal W}$, e.g., graph, ontology, were represented into vector spaces \cite{chjhah,ncvclj}.
Since ${\cal W}$ may have more complex relations than vector spaces, by the similar processes in Example \ref{item:mlsystrans_nlp},
we transform ${\cal W}$ to representable functors in the category of presheaves using Yoneda embedding:
$$\xymatrix@C=1.4pc{
{\scriptstyle {\cal W}} \ar[r]^<<<<{path} & {\scriptstyle path({\cal W})} \ar[r]^<<<<{Q_{\sim}} & {\scriptstyle path({\cal W})/\sim} \ar[r]^<<<<Y 
 & {\scriptstyle {\bf Set}^{(path({\cal W})/\sim)^{\textrm{op}}}}\\
}$$

\item\label{item:mlsystrans_slice}
{\em Slice ML system, coslice ML system:}
Let $({\bf M},R)$ be an ML sysytem that is a category (directed graph with identities and composition) and $N$ an ML element in ${\bf M}$. Define $({\bf M}/N,R/N)$ by
\begin{itemize}
\item
${\bf M}/N=\{x:X\rightarrow N\;|x\in R\;\}$,
\item
a relation from $x:X\rightarrow N$ to $y:Y\rightarrow N$ in $R/N$ is a relation $e:X\rightarrow Y$ in $R$ such that
 $$\xymatrix@C=1.6pc{
{\scriptstyle X}\ar[rr]^e \ar[dr]_x && {\scriptstyle Y}\ar[dl]^y\\
&{\scriptstyle N}\\
}$$
commutes. Then $({\bf M}/N,R/N)$ is  the {\em slice} ML system of $({\bf M},R)$ over $N$.
\end{itemize}
If $e:N_1\rightarrow N_2$ is a relation in $R$, there are ML system transformations 
$$e_!:({\bf M}/N_1,R/N_1)\rightarrow ({\bf M}/N_2,R/N_2)$$ 
and 
$$e^*:({\bf M}/N_2,R/N_2)\rightarrow ({\bf M}/N_1,R/N_1)$$ given by composition and pullback, respectively. 
Dually, one defines coslice ML systems and their adjunctions.
We shall see the details of the change of base functors in \ref{subsect:descent}.

\item\label{item:mlsystrans_optimization}
{\em Optimal search:}
An optimal search $({\bf C}, \textrm{ob})$ over ${\bf R}^n$, consisting of a feasible space ${\bf C}\subseteq {\bf R}^n$ and an objective function $\textrm{Ob}:{\bf R}^n\rightarrow {\bf R}$, 
aims to search the best element(s) in the feasible space ${\bf C}$, with respect to certain criteria. 
One of mathematical optimization problems is as follows.
$$
\begin{aligned}
\textrm{minimize} \quad & f_0(x) & \\
\textrm{subject to} \quad & f_i(x)\leq b_i, & i =1, \ldots, n,\\
\end{aligned}
$$
where $f_j:{\bf R}^n\rightarrow {\bf R}, j=0,1,\ldots,n$ are functions and $b_i\in {\bf R}, i=1,\ldots,n$, 
It amounts to the optimal search $({\bf C},\textrm{Ob})$, 
where ${\bf C}=\{x\in {\bf R}^n\;|\;f_i(x)\leq b_i,i=1,\ldots,n\}$ and $\textrm{Ob}:{\bf R}^n \rightarrow {\bf R}$ is defined by $f_0$.
Optimal search variables from a feasible space, can be discrete, categorical, or continuous. An objective function may have its {\em optimal type}, e.g., minimum, maximum, 
inflection points on the function.
 
Let ${\bf OptS}$ be a collection of optimal searchs over ${\bf R}^n$. Since $2^{{\bf R}^n}$ is a poset with $\subseteq$ and objective functions can be compared point wise.
${\bf OptS}$ is a poset and so an ML system. 
An ML system transformation $T:{\bf OptS}_1\rightarrow {\bf OptS}_2$ is a poset homomorphism (monotone function) .
\end{enumerate}
\end{example}
As explained in Examples \ref{item:mlsystrans_nlp} and \ref{examp:mlsystrans}, we have
\begin{proposition}
Let $({\bf M}, R)$ be an ML system. Then there are ML system transformations
 given by the compositions of the following ML system transformations:
$$\xymatrix@C=1pc{
{\scriptstyle path({\bf M}, R)} \ar[r]^<<<{Q_{\sim}} & {\scriptstyle path({\bf M}, R)/\sim} \ar[r]^<<<{Y} 
& {\scriptstyle {\bf Set}^{(path({\bf M}, R)/\sim)^{\textrm{op}}}}\\
}$$
and
$$\xymatrix@C=1.8pc{
{\scriptstyle ({\bf M}, R)}\ar[r]^<<<<<{path} & {\scriptstyle Upath({\bf M}, R)} \ar[r]^<<<<<{UQ_{\sim}} & {\scriptstyle U(path({\bf M}, R)/\sim))} \ar[r]^<<<<<{UY}  &
 {\scriptstyle U({\bf Set}^{(path({\bf M}, R)/\sim)^{\textrm{op}}}})\\
}$$
where $U:{\bf Cat}\rightarrow {\bf Grph}$ is the forgetful functor, forgetting categorical composition and identity edges.
\end{proposition}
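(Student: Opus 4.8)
The plan is to reduce both displayed statements to two ingredients: that every individual arrow in the two chains is an ML system transformation in the appropriate sense, and that ML system transformations are closed under composition. Once these are in place, both composites are ML system transformations by iterated composition, and nothing further is needed. Throughout I treat $({\bf M},R)$ as a directed graph (the binary-relation case of Remark \ref{remark:transformation}(2)), so that the free category $path({\bf M},R)$ is defined, and I take $\sim$ to be a congruence on $path({\bf M},R)$ as in Example \ref{item:mlsystrans_nlp}.

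First I would record the composition lemma. If $T_1:({\bf M}_1,R_1)\rightarrow({\bf M}_2,R_2)$ and $T_2:({\bf M}_2,R_2)\rightarrow({\bf M}_3,R_3)$ are ML system transformations and $e_1R_1e_2$, then $T_1(e_1)R_2T_1(e_2)$ because $T_1$ preserves $R_1$, whence $T_2T_1(e_1)R_3T_2T_1(e_2)$ because $T_2$ preserves $R_2$; thus $T_2\circ T_1$ preserves $R_1$ and is an ML system transformation by Definition \ref{def:transformation}(2). This is a one-line chase on the defining implication and uses no extra hypotheses; I would also note that the same argument, read at the level of edges, gives closure of directed graph homomorphisms under composition, which is what the second diagram requires.

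Next I would identify the three factors of the first chain. The quotient map $Q_{\sim}$ is an ML system transformation by Proposition \ref{prop:quotient}(1), and here I would make the compatibility of $\sim$ with composition explicit, since it is precisely what makes $path({\bf M},R)/\sim$ a category, hence an ML system in the sense of Remark \ref{remark:transformation}(3). The Yoneda embedding $Y:path({\bf M},R)/\sim\rightarrow{\bf Set}^{(path({\bf M},R)/\sim)^{\textrm{op}}}$ sends each object to its representable presheaf $\textrm{hom}(-,M)$ and each arrow to the induced natural transformation; it is a functor, hence an ML system transformation by Remark \ref{remark:transformation}(3). The composition lemma then yields that $Y\circ Q_{\sim}$ is an ML system transformation, which is the first claim. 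For the second chain I would push everything through the forgetful functor $U:{\bf Cat}\rightarrow{\bf Grph}$: the leading arrow $path$ is the unit $({\bf M},R)\rightarrow Upath({\bf M},R)$ of the adjunction $path\dashv U$, sending each edge to the corresponding length-one path, and is a graph homomorphism, hence an ML system transformation by Remark \ref{remark:transformation}(2); since $U$ carries any functor to its underlying graph homomorphism, $UQ_{\sim}$ and $UY$ are likewise ML system transformations of the graph type. Applying the composition lemma for graph homomorphisms gives that $UY\circ UQ_{\sim}\circ path$ is an ML system transformation, which is the second claim.

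The main obstacle I anticipate is not the composition bookkeeping but justifying that $\sim$ is a genuine congruence, so that the quotient is a category and $Y$ is defined on it: one must check that $wordchain_1\sim wordchain_2$ is stable under pre- and post-composition by arbitrary paths and respects sources and targets, which is exactly the compatibility hypothesis that licenses Proposition \ref{prop:quotient}. A secondary point worth stating carefully is that the second chain genuinely lives in ${\bf Grph}$ rather than ${\bf Cat}$: the arrow labelled $path$ there is the unit graph homomorphism, not a functor, and the remaining arrows are the $U$-images of functors, so composition must be verified as graph homomorphisms and not silently upgraded to functors.
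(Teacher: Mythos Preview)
Your proposal is correct and is essentially a careful unpacking of what the paper does: the paper offers no formal proof at all, but simply prefaces the proposition with ``As explained in Examples \ref{item:mlsystrans_nlp} and \ref{examp:mlsystrans}, we have'' and leaves it at that. Your argument makes explicit the two ingredients the examples implicitly rely on---that each displayed arrow ($path$, $Q_{\sim}$, $Y$, and their $U$-images) is an ML system transformation of the appropriate type, and that such transformations are closed under composition---and your identification of the leading $path$ arrow in the second chain as the unit of $path\dashv U$ (hence a graph homomorphism rather than a functor) is exactly the right reading, and more precise than anything the paper states.
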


Mathematical objects are determined by-and understood by-the network of relationships they
enjoy with all the other objects of their species \cite{maz}.
Yoneda embedding represents each ML element $E$ to its hom set system transformation $\textrm{hom}(-,E)$ which maps each element $X$ to the set $\textrm{hom}(X,E)$
of all relations between $X$ and $E$. Hence we use ML element relations to study ML elements by Yoneda embedding.

\section{Transforming and Comparing ML System Transformations}\label{sect:transformcompare}
In this section and Section \ref{sect:mladjunction},
we assume that ML systems are categories: directed graphs with identities and composition so that categorical results are applicable.
Hence ML system transformations between ML systems are functors and relations/maps between functors are natural
transformations categorically. See \ref{subsect:functor} for the details of functors and natural transformations.

Let $({\bf M}_1,R_1)$ and $({\bf M}_2,R_2)$ be two ML systems and let
$({\bf M}_2,R_2)^{({\bf M}_1,R_1)}$ be specified by
\begin{itemize}
\item
objects: the collection of ML system transformations from $({\bf M}_1,R_1)$ to $({\bf M}_2,R_2)$,
\item
relations: the collection of natural transformations between ML system transformations.
\end{itemize}
Then $({\bf M}_2,R_2)^{({\bf M}_1,R_1)}$ is an ML system.

Recall that a preorder is a reflexive and transitive binary relation.
ML system transformations are preordered naturally by flatting natural transformations between two ML system transformation.
\begin{proposition}
Let $({\bf M}_1,R_1)$ and $({\bf M}_2,R_2)$ be two ML systems.
\begin{enumerate}
\item
$({\bf M}_2,R_2)^{({\bf M}_1,R_1)}$ is a category and so an ML system.
\item
All ML transformations from $({\bf M}_1,R_1)$ to $({\bf M}_2,R_2)$ have a preorder $ \preceq$, defined by $T_1 \preceq T_2$ if there is a natural transformation $\alpha:T_1\rightarrow T_2$.
\end{enumerate}
\end{proposition}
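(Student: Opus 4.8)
The plan is to recognize $({\bf M}_2,R_2)^{({\bf M}_1,R_1)}$ as the standard functor category and verify the category axioms componentwise, and then to read the preorder $\preceq$ off from the reflexivity and transitivity witnessed, respectively, by identity natural transformations and by vertical composition. Since we are in the setting where both systems are categories, an object is a functor $T:({\bf M}_1,R_1)\rightarrow ({\bf M}_2,R_2)$, and a relation from $T_1$ to $T_2$ is a natural transformation $\alpha$, i.e.\ a family of relations $\alpha_X:T_1(X)\rightarrow T_2(X)$ in $R_2$ indexed by $X\in{\bf M}_1$ satisfying, for every relation $f:X\rightarrow Y$ in $R_1$, the naturality condition
$$\alpha_Y\circ T_1(f) = T_2(f)\circ \alpha_X.$$

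For part (1), I would first supply the data making this a category. The identity on $T$ is the natural transformation whose components are $(\mathrm{id}_T)_X=\mathrm{id}_{T(X)}$, whose naturality square is trivial. Given $\alpha:T_1\rightarrow T_2$ and $\beta:T_2\rightarrow T_3$, vertical composition is defined by $(\beta\circ\alpha)_X=\beta_X\circ\alpha_X$. The one genuine verification is that $\beta\circ\alpha$ is again natural, which I would carry out by pasting the two naturality squares along their shared edge $T_2(f)$:
$$(\beta\circ\alpha)_Y \circ T_1(f) = \beta_Y \circ \alpha_Y \circ T_1(f) = \beta_Y \circ T_2(f)\circ \alpha_X = T_3(f)\circ \beta_X\circ \alpha_X = T_3(f)\circ(\beta\circ\alpha)_X.$$
Associativity and the unit laws for vertical composition then follow pointwise from the corresponding laws in $({\bf M}_2,R_2)$, which hold by hypothesis. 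Hence $({\bf M}_2,R_2)^{({\bf M}_1,R_1)}$ is a category, and since every category is in particular a directed graph with identities and composition, it is an ML system, completing part (1).

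For part (2), the preorder $\preceq$ is the \emph{flattening} of the relation of part (1): we forget the specific witnessing natural transformation and remember only its existence. Reflexivity $T\preceq T$ is witnessed by the identity natural transformation $\mathrm{id}_T$ constructed in part (1). Transitivity follows because, if $T_1\preceq T_2$ via $\alpha$ and $T_2\preceq T_3$ via $\beta$, then the vertical composite $\beta\circ\alpha$ shown natural in part (1) witnesses $T_1\preceq T_3$. Being reflexive and transitive, $\preceq$ is a preorder.

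The step I expect to carry the only real content is the naturality of the vertical composite, i.e.\ the square pasting above, which is exactly where associativity of composition in $({\bf M}_2,R_2)$ is used; everything else is bookkeeping inherited pointwise from $({\bf M}_2,R_2)$. I would also flag two minor points rather than dwell on them: first, a size caveat, since the collection of transformations may be large, but within the ML-system framing we treat these as legitimate categories and do not pursue set-theoretic subtleties; and second, that $\preceq$ is only a preorder and not a partial order, since distinct transformations may be joined by natural transformations in both directions (e.g.\ a natural isomorphism) without being equal, so antisymmetry fails in general and only the preorder claim should be made.
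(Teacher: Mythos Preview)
Your proposal is correct. The paper does not supply a proof of this proposition, evidently regarding it as a restatement of standard facts about functor categories; your argument fills in exactly those standard details (identity natural transformations, vertical composition, componentwise verification of associativity and units, and then flattening to a preorder), so it matches the intended reasoning.
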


\section{Adjunctions between ML Systems}\label{sect:mladjunction}
Recall that an adjunction between two categories ${\bf C}$ and ${\bf D}$ is given by a pair of functors $F:{\bf C}\rightarrow {\bf B}$ and $G:{\bf B}\rightarrow {\bf C}$ and
forms a functor loop, corresponding to a {\em weak form} of equivalence between ${\bf C}$ and ${\bf D}$, such that for $C\in {\bf C}_0$ and $B\in {\bf B}_0$
$$\infer={FC\rightarrow B}{C\rightarrow GB}$$ 
which is natural in $C$ and $B$. Adjunction can be defined by universal arrows (See \ref{subsect:adjoint} for the details). 

Throughout this section,  $\langle F,G,\varphi\rangle:({\bf M}_1,R_1)\rightarrow ({\bf M}_2,R_2)$ is an adjunction between two ML systems $({\bf M}_1,R_1)$
and $({\bf M}_2,R_2)$.
\subsection{Solve Problems Optimally by Adjunctions}\label{subset:optadj}
An ML system transformation $T:({\bf M}_1,R_1)\rightarrow ({\bf M}_2,R_2)$ transforms problems in ML system $({\bf M}_1,R_1)$ to $({\bf M}_2,R_2)$
as the problems transformed might be easier to solve in $({\bf M}_2,R_2)$. 
After the transformed problems being solved in $({\bf M}_2,R_2)$, 
one needs to transform the solutions back to $({\bf M}_1,R_1)$, with the structures used for the solutions being preserved, so that the original problems are solved in $({\bf M}_1,R_1)$.
Hence an ML system transformation $S: ({\bf M}_2,R_2)\rightarrow ({\bf M}_1,R_1)$ is needed.
If $T$ and $S$ are mutually inverse to each other (isomorphism) or inverse to each other up to natural isomorphism of functors (equivalence),
then $T$ and $S$ are just ``relabelling" bijectively or adding more copies of objects up to isomorphism and so it is hard to reduce the complexity of the problems by using the ML transformations $T$ and $S$
as an isomorphism could not reduce the complexity of the problem.
Categorical concept adjunction, a functor loop, provides a pipeline of transforming problems 
between ML systems $({\bf M}_1,R_1)$ and $({\bf M}_2,R_2)$ in optimal ways.

Since adjunction $\langle F,G,\varphi\rangle:({\bf M}_1,R_1)\rightarrow ({\bf M}_2,R_2)$ provides a loop and $F$ and $G$ determine each other uniquely and naturally,
$F$ produces the most efficient solutions to the problem posed by $G$.  Hence we use adjunctions to
transform ML systems and obtain optimal ways to solve ML problems.
\begin{example}
\begin{enumerate}
\item
Yoneda embedding forms part of adjunction generally.
Recall that a {\em total} category is a small category whose Yoneda embedding has a left adjoint.
Totality of a category was studied very extensively \cite{t,s,k,d}.
Many classes of categories are total,
including any category which is monadic over ${\bf Set}$, Grothendieck toposes,
locally presentable categories and so are ${\bf Grph}$ and ${\bf Top}$.
Hence Yoneda embedding of an ML system $({\bf M}, R)$ has a left adjoint. 
To calculate the left adjoint $F$ of $Y$, we consider
$$\infer=[,]{FX\rightarrow M}{X\rightarrow YM}$$ 
where $X:{\bf M}^{\textrm{op}}\rightarrow {\bf Set}$ is a set valued functor.
Since each presheaf is a colimit of representable set valued functors, one assumes $X$ is reprsentable and $FX$
is the representing object of $X$.
Hence $F$ is defined by the colimit of these reprersenting objects.
\item
A {\em forgetful functor} is a functor, defined by {\em forgetting} some structure, such that, 
forgetting composition and identities of a category to get a directed graph,
forgetting algebraic structures, e.g., monoid, group, module, to obtain a set, etc.
The left adjoint of such a forgetful functor is called {\em free} functor, such as, free category functor, free monoid functor, free group functor,
and free module functor.
\item
A monad and its $T$-algebras leads to an adjunction (see \ref{subsect:monad}) and so an optimal way to solve problems with algebraic structures. 
\item
Recall that Word2vec transformation $w2v:{\bf W}_1\rightarrow {\bf R}^n$ discused in Example \ref{item:mlsystrans_nlp}.
If $w2v$ has a left (or right) adjoint $L$ and so it is part of an adjunction to solve the word representation problem, then
for each $w\in {\bf W}$ and each $v\in {\bf R}^n$, 
$$\infer=[,]{Lv\rightarrow w}{v\rightarrow w2v(w)}$$ 
which is natural in $w$ and $v$.
Similarly, the ML system transformation loop
$$\xymatrix@R=1pc@C=1pc{
{\scriptstyle ({\bf Data},R_d)} \ar@/_1pc/[rr]_{M} && {\scriptstyle ({\bf Model},R_m)}\ar@/_1pc/[ll]_P
}$$
discussed in Example \ref{data22model} forms an adjunction if and only if for $d\in {\bf Data}$ and $m\in {\bf Model}$
$$\infer=[,]{Pm\rightarrow d}{m\rightarrow Md}$$ 
which is natural in $d$ and $m$ or
$$\infer=[,]{Md\rightarrow m}{d\rightarrow Pm}$$ 
which is natural in $d$ and $m$.
\item
Let $e:N_1\rightarrow N_2$ be a relation in an ML system $({\bf M},R)$ which has pullbacks. 
Then we have the following adjoint pair:
$$\xymatrix@C=1.6pc{ 
{\scriptstyle ({\bf M},R)/N_2} \ar@<-0.5ex>[r]_{e^*} & \ar@<-0.5ex>[l]_{e_!}{\scriptstyle ({\bf M},R)/N_1}
}$$
where $({\bf M},R)/N_2$ is the slice ML system over $N_2$ with all relations to $N_2$ being objects, 
$e_!(D, s) = es$, $e^*(C, r) = \pi_1$ which is given by the following pullback:
$$\xymatrix@C=1.6pc{ 
{\scriptstyle N_1\times_{N_2}C} \ar[r]^>>>>>>{\pi_2} \ar[d]_{\pi_1} & {\scriptstyle C}\ar[d]^r\\
{\scriptstyle N_1}\ar[r]^e & {\scriptstyle N_2}
}$$
 The unit and counit of $e_!\dashv e^*$ is given by $\eta (s:D\rightarrow N_1) =\langle s,1_C \rangle: C \rightarrow N_1 \times_{N_2} C$
 and $\varepsilon (r:C\rightarrow N_2) = \pi_2$, 
where $\pi_2$ is defined by the last pullback and $\langle s,1_C \rangle$ by the following pullback:
$$\xymatrix@C=1.6pc{ 
{\scriptstyle D} \ar@/_1pc/[dddr]_s \ar@/^1pc/[drr]^{=}\ar@{..>}[dr]|{\langle s,1_D\rangle}\\
& {\scriptstyle N_1\times _{N_2} C} \ar[dd]_{\pi'_1} \ar[r]_>>>>>{\pi'_2} & {\scriptstyle D} \ar[d]^s\\
&& {\scriptstyle N_1}\ar[d]^e\\
& {\scriptstyle N_1}\ar[r]^e & {\scriptstyle N_2}.\\
}$$
\end{enumerate}
\end{example}

\subsection{Extend Machine Learning Systems by Adding Algebra Structures}
ML systems and system transformations were formatted and reasoned categorically. 
ML objects and pipelines can have other formations other than the categorical way.
However, ML aims to not only understand and summarize the existing knowledge from data but also grow and create insights.

Recall that a monad on a category is an endo functor with monoid-like structure. 
Monads and their $T$-algebras can provide algebraic structures to the category.
An adjunction gives rise to a monad and  every monad arises this way (see \ref{subsect:monad} for the details).

\begin{definition}
A {\em monad} $T=\langle T,\eta,\mu\rangle$
on an ML system $({\bf M},R)$ is an ML system transformation $T:({\bf M},R)\rightarrow ({\bf M},R)$ and two
natural transformations
$$\eta:I\rightarrow T,\mu:T^2\rightarrow T$$
such that
$$\xymatrix@C=1.6pc{
{\scriptstyle T^3} \ar[r]^{T\mu}\ar[d]_{\mu T} & {\scriptstyle T^2}\ar[d]^{\mu}\\
{\scriptstyle T^2} \ar[r]^{\mu} & {\scriptstyle T}\\
}$$
and 
$$\xymatrix@C=1.6pc{
{\scriptstyle IT} \ar[r]^{\eta T}\ar[dr]_1 & {\scriptstyle T^2} \ar[d]|{\mu} & {\scriptstyle TI\ar[l]_{T\eta}} \ar[dl]^1\\
& {\scriptstyle T}\\
}$$
are commutative, where $I:({\bf M},R)\rightarrow ({\bf M},R)$ is the identity functor, sending $e:X\rightarrow Y$ to $e:X\rightarrow Y$.
\end{definition}
Each adjunction $\langle F,G,\varphi\rangle:({\bf M}_1,R_1)\rightarrow ({\bf M}_2,R_2)$ between ML systems gives rise to a monad
$\langle GF,\eta,G\varepsilon F\rangle$ on $({\bf M}_1,R_1)$  and very monad on $({\bf M}_1,R_1)$ arises this way.

A monad $T:({\bf M},R)\rightarrow ({\bf M},R)$ generates algebraic structures to $({\bf M},R)$.
\subsection{Link Machine Learning Objects by Universal Properties}
Universal property characterizes objects by their relations or links to other objects uniquely up to isomorphism.
Adjunctions, free objects, limits/colimits, and representable functors are the examples that are determined by
their universal property.
Universal property confirms the existence and uniqueness of a relation or map to fill in.  By the universal property
the relation filled in is a best or most efficient solution. Hence ML objects can be linked by universal property naturally.

\begin{example}
\begin{enumerate}
\item
Let $A,B,X$ be tables/neural networks related and $\langle A, B\rangle$ the join/merge of $A$ and $B$.
If there are the relations  $f:X\rightarrow A$ and $g:X\rightarrow B$ then there is a unique relation $\langle f,g\rangle:X\rightarrow \langle A, B\rangle$ such that
$$\xymatrix@C=1.6pc{ 
& {\scriptstyle X}\ar@{.>}[d]|{\langle f,g\rangle} \ar[dl]_f\ar[dr]^g\\
{\scriptstyle A}  & {\scriptstyle \langle A, B\rangle} \ar[l]_{\pi_1} \ar[r]^{\pi_2} & {\scriptstyle B}\\
}$$
commutes, where $\pi_1,\pi_2$ are projections.
\item
Given a representable functor $F:{\bf C}^{\textrm{op}}\rightarrow {\bf Set}$, if $F$ is represented by $X$ and $Y$, then
there are relations $f:X\rightarrow Y$ and $g:Y\rightarrow X$ such that $gf=1_X$ and $fg=1_Y$.
\item
The adjunctions given by Yoneda embedding (see \ref{subsect:yoneda}) and change of base (see \ref{subsect:descent}) are characterized by their universal properties, which can be used to fill out 
the gap relations between ML elements, e.g., words and chains of words in a corpus.

Let ${\bf W}_0$ be the collection of all words from a set of large corpora
and ${\bf W}_1$ the directed graph by linking words in ${\bf W}_0$ using the ordered pairs of words appearing in the large corpora.
Applying $path$ to ${\bf W}_1$, one has a directed graph with composition (category) $path({\bf W}_1)$.
So categorical notions and results are applicable to $path({\bf W}_1)$.
For example, special maps, e.g., epics and monics, are discussed as follows.

A word chain $e:w_1\rightarrow w_2$ in $path({\bf W}_1)$ is {\em epic} if
$c_1e=c_2e$ implies $c_1$ and $c_2$ have the same meaning. Dually,
a word chain $e:w_1\rightarrow w_2$ in $path({\bf W}_1)$ is {\em monic} if
$ec_1=ec_2$ implies $c_1$ and $c_2$ have the same meaning. 

$$e=\textrm{``The meaning of this sentence is defined by the succeeding chain of words"}$$
and 
$$m=\textrm{``The meaning of this sentence is determined by the preceding chain of words"}$$
are clearly epic and monic, respectively if these words are in ${\bf W}_0$.
$s=\textrm{``I love fruits"}$ is neither epic nor monic since
$$t_1s=t_2s, st_1=st_2, t_1\neq t_2$$
up to meaning, where $t_1=\textrm{``I love apples"}$ and $t_1=\textrm{``I love oranges"}$.
Reasoning $path({\bf W}_1)$ categorically
holds independent interest, is beyond the scope of the current paper, and
will be addressed separately.

Since $path({\bf W}_1)$, we have the Yoneda embedding
$${\scriptstyle Y}: {\scriptstyle path({\bf W}_1)}\rightarrow {\scriptstyle {\bf Set}^{(path({\bf W}_1))^{\textrm{op}}}}.$$
The left adjoint $L$ of $Y$  is calculated by representing a presheaf as a colimit of presentable functors.

When $path({\bf W}_1)$ has pullbacks, for each chain of word $p:w_1\rightarrow w_2$, we have an adjunction:
$$\xymatrix{ 
{\scriptstyle path({\bf W}_1)/w_2} \ar@<-0.5ex>[r]_{p^*} & \ar@<-0.5ex>[l]_{p_!} {\scriptstyle path({\bf W}_1)/w_1}
}$$
where $p_!(D, s) = ps$, $p^*(C, r) = \pi_1$ which is given by the following pullback:
$$\xymatrix{ 
{\scriptstyle w_1\times_{w_2}C} \ar[r]^{\pi_2} \ar[d]_{\pi_1} & {\scriptstyle C}\ar[d]^r\\
{\scriptstyle w_1}\ar[r]^p & {\scriptstyle w_2}
}$$
\end{enumerate}
\end{example}

\section{Conclusions}\label{sect:conclusions}
Data is from various platforms with multiple formats, noisy, and changes constantly.
ML elements and systems, driven by data and outputting new data, must be robust to the changes.
The relations between ML elements make more sense than the isolated ML elements.
We studied the ML elements that we are interested in together
as an ML system. The relations between ML elements we concerned are algebraic operations, binary relations, 
and binary relations with composition that can be reasoned categorically.
An ML system transformation between two systems is a
map between the systems, which preserves the relations we concerned. 
The ML system transformations given by quotient or clustering, representable functor and Yoneda embedding 
were highlighted and discussed by ML examples. 
ML elements were embedded to set valued functors which provide multiple relation perspectives for each ML element.
ML system transformations were linked and compared by their maps at 2-cell, natural transformations. 
ML transformations lead to fresh perspectives and uncover new insights.
Special ML system transformation loops, adjunctions between ML systems, offered the optimal way of solving ML problems.
New ML insights and structures can be obtained from universal properties and algebraic structures given by monads, which are generated from
adjunctions. 


\newpage

\appendix

The minimum requirements of the relation, directed graph, and category theory
for the paper include: binary relation, equivalence relation, equivalence class, quotient, category,
homomorphism, isomorphism, coproduct, pullback, pushout,
monic, epic, injection, initial object, functor, natural transformation, Yoneda lemma and embedding, adjunction, monad, $T$-algebra. 
For the related notions, notations, results, and a systematic introduction, the reader may consult, for instance, \cite{Ma,ahs, fs}.
\section{Binary Relations and Directed Graphs}
\subsection{Binary Relations}\label{subsect:relation}
Recall that a {\em binary relation} on a nonempty set $S$ is a subset $\rho \subseteq S\times S$, where 
$S \times S = \{(s_1, s_2) | s_1, s_2\in S\}$ is the {\em Cartesian product} of $S$ and $S$.
A binary relation $\rho$ on $S$ is {\em reflexive} if $(s, s) \in\rho$ for all $s \in S$, 
{\em symmetric} if $(s_1,s_2)\in S$ implies $(s_2,s_1)\in \rho$, 
{\em transitive} if $(s_1,s_2)\in \rho$ and $(s_2,s_3)\in \rho$ imply $(s_1,s_3)\in\rho$,
and {\em antisymmetric} if $(s_1,s_2)\in \rho$ and $(s_2,s_1)\in\rho$ imply $s_1=s_2$. 

A {\em poset} $(P,\leq)$ consists of a nonempty set $P$ and a reflexive, antisymmetric, and transitive binary relation $\leq$ on $P$.

An {\em equivalence relation} on $S$ 
is a reflexive, symmetric, and transitive binary relation on $S$. 
Let $\rho$ be an equivalence relation on $S$ and $s\in S$. The {\em equivalence class} of $s$ is $\{x\in S\;|\;(x,s)\in \rho\}$, denoted by $[s]_{\rho}$.

{\em Clustering} aims to group a set of the objects in such a way that objects in the same cluster are more similar to each other.
In a nonempty set $S$, clustering  the objects (elements) of $S$ amounts to grouping or partitioning them, which turns
out to be an equivalence relation on $S$.
\subsection{Directed Graphs}\label{subsect:graph}
Recall that a {\em (multi)directed graph} $(N, E)$ consists of a collection $N$ of {\em nodes} (or {\em vertices}), a collection $E$ of {\em edges}, 
and two functions
$$\xymatrix{
E \ar@<0.6ex>[r]^{\textrm{\scriptsize from}} \ar@<-0.6ex>[r]_{\textrm{\scriptsize to}}& N
}$$
that specify ``from" node and ``to" node of each edge $f\in E$.
Write $f\in E$ by $f:X\rightarrow Y$, where $X=\textrm{from}(f), Y=\textrm{to}(f)$.

\section{Categories, Functors, and Natural Transformations}
\subsection{Categories}\label{subsect:category}
A {\em category} ${\bf C}$ is a directed graph $(N,E)$ with identities and associative composition, which are two functions: 
$$\textrm{id}:N\rightarrow E\;\;\textrm{and}\;\;\textrm{comp}:E\times_NE\rightarrow E,$$
where $E\times_NE=\{(f,g)\;|\;\textrm{to}(f)=\textrm{from}(g)\}\subseteq E\times E$ collects all {\em composable} pairs of edges, such that
 \begin{itemize}
\item
for all edge $f:X\rightarrow Y$, $1_Yf=f1_X=f$, where $1_X=\textrm{id}(X)$,
\item
for all $f:A\rightarrow B$, $g:B\rightarrow C$, $h:C\rightarrow D$, $h(gf)=(hg)f$, where $gf=\textrm{comp}(f,g)$ for each composable .
\end{itemize}

As a category ${\bf C}$ is a directed graph, write the set of nodes and the set of edges of ${\bf C}$ by ${\bf C}_0$ and ${\bf C}_1$, respectively. 
Nodes and edges in a category are also called {\em objects} and {\em maps} of the category, respectively.

Given a category ${\bf C}$, if we flip the directions of all maps in ${\bf C}$
then we obtain its {\em dual category}, 
denoted by ${\bf C}^{\rm op}$.
Clearly, $({\bf C}^{\rm op})^{\rm op}={\bf C}$.

Given objects $X,Y\in {\bf C}_0$, write
$$\textrm{hom}_{\bf C}(X,Y)=\{f\in {\bf C}_1\;|\;\textrm{from}(f) =X\textrm{ and }\textrm{to}(f)=Y\}.$$

A {\em subcategory} ${\bf C}'$ of a category ${\bf C}$ is given 
by any subcollections of the objects and maps of ${\bf C}$, which is a category
under the from, to, composition, and identity operations of ${\bf C}$.

Given a directed graph $G=(N,E)$ and two nodes $a,b\in N$, a {\em path} from $a$ to $b$ is a sequence of composable edges $[e_1,e_2,\ldots, e_k]$ such that
$$\textrm{from}(e_1)=a,\ldots,\textrm{to}(e_i)=\textrm{from}(e_{i+1}),\ldots,\textrm{to}(e_k)=b,i=1,\ldots,k-1.$$
Each node has an empty path $[\,]$.
Each directed graph $G=(N,E)$ generates a category 
$$Path(G) = Path(N,E)=(N,\{\textrm{all paths in }G\}),$$
by considering all paths as edges, empty path as identity, path concatenation as composition. Call $Path(G)$ the {\em free category} on directed graph $G$.

On the other hand, given a category ${\bf C}$, one has a directed graph $F({\bf C})$ by forgetting identity edges and edge composition.

Some examples of categories are listed below.
\begin{enumerate}
\item
Each poset $(P,\leq)$ is a category with the elements of $P$ as its objects and  $\leq$ as maps.
\item
All sets and functions between sets form a category ${\bf Set}$.
\item
All directed graphs and graph homomorphisms between graphs form a category ${\bf Grph}$.
\item
All categories and functors between categories form a category ${\bf Cat}$.
\end{enumerate}

\subsection{Limits and Colimits}\label{section:limitsand colimits}
Limits and colimits are an example of universals.
Given a category ${\bf C}$, an ${\bf I}$-indexed {\em diagram}\index{diagram} in ${\bf C}$ is a functor
$D:{\bf I}\rightarrow {\bf C}$, where the category ${\bf I}$ is thought of as index category.
A $D$-{\em cone}\index{cone} is a natural transformation
$\phi:L\rightarrow D$, where $L:{\bf I}\rightarrow {\bf C}$ is a constant functor that sends each ${\bf I}$-map
$f:I\rightarrow J$ to a constant $1_L:L\rightarrow L$ in ${\bf C}_1$. 
Each $D$-cone can be specified by a ${\bf C}$-object $L$ together with a family of ${\bf C}_1$ elements
$(\phi_I:L\rightarrow DI)_{I\in {\rm ob}({\bf I})}$ such that $Df\phi_I=\phi_J$: 
$$\xymatrix{
& {\scriptstyle L}\ar[dl]_{\phi_I} \ar[dr]^{\phi_J}\\
{\scriptstyle DI}\ar[rr]^{Df} && {\scriptstyle DJ}
}$$
for each ${\bf I}$-map $f:I\rightarrow J$.
A {\em limit}\index{limit} of the diagram $D:{\bf I}\rightarrow {\bf C}$ is a $D$-cone $(L,\phi)$ such that 
for each other $D$-cone $(J,\psi)$ there is a unique ${\bf C}_1$ element $u:J\rightarrow L$ making the following
diagram
$$\xymatrix{
{\scriptstyle J} \ar[dr]^{\psi_I} \ar@{..>}[dd]_u\\
& {\scriptstyle DI}\\
{\scriptstyle L} \ar[ur]_{\phi_I}
}$$
commute for each ${\bf I}$-object $I$.
If ${\bf I}$ is specified by the following graphs
$$\xymatrix{
&&&& {\scriptstyle {\bullet}} \ar[d]\\
{\scriptstyle {\bullet,}} & {\scriptstyle {\bullet}}\ar@<0.3ex>[r] \ar@<-0.3ex>[r] & {\scriptstyle {\bullet,}} & {\scriptstyle {\bullet}} \ar[r] & {\scriptstyle {\bullet}}\\
}$$
then the limit of $D:{\bf I}\rightarrow {\bf C}$ is called a {\em terminal object}, an {\em equalizer}, 
a {\em pullback (square)} in ${\bf C}$, respectively. 

Explicitly, 
a ${\bf C}$-object $1$ is a {\em terminal object}\index{terminal object} provided for each ${\bf C}$-object $X$
there is a unique ${\bf C}$-map $!_X:X\rightarrow 1$.

A commutative square
$$\xymatrix{
{\scriptstyle P}\ar[d]_{p_1} \ar[r]^{p_2} & {\scriptstyle Y} \ar[d]^g\\
{\scriptstyle X} \ar[r]_f & {\scriptstyle Z}
}$$
in ${\bf C}$ is called a {\em pullback (square)}\index{pullback (square)} provided given any
${\bf C}$-maps $w_1:W\rightarrow X$ and $w_2:W\rightarrow Y$ with $fw_1=gw_2$ there is
a unique ${\bf C}$-map $w:W\rightarrow P$ such that
$$p_1w=w_1\text{ and }p_2w=w_2:$$
$$\xymatrix{
{\scriptstyle W} \ar@/_/[ddr]_{w_1}\ar@/^/[drr]^{w_2} \ar@{..>}[dr]|{w}\\
& {\scriptstyle P}\ar[d]_{p_1} \ar[r]^{p_2} & {\scriptstyle Y} \ar[d]^g\\
& {\scriptstyle X} \ar[r]_f & {\scriptstyle Z}
}$$

For two parallel ${\bf C}$-maps $f,g:X\rightarrow Y$, the {\em equalizer}\index{equalizer}
of $f$ and $g$ is a ${\bf C}$-map $e:E\rightarrow X$ such that $fe=ge$
and $e$ is unique with this property:
if a ${\bf C}$-map $z:Z\rightarrow X$ is such that $fz=gz$
then there is a unique ${\bf C}$-map $d:Z\rightarrow E$ such that
$ed=z$:
$$\xymatrix{
{\scriptstyle Z} \ar@{..>}[d]_{d} \ar[dr]^{z}\\
{\scriptstyle E} \ar[r]^e & {\scriptstyle X}\ar@<0.3ex>[r]^f \ar@<-0.3ex>[r]_g & {\scriptstyle Y}\\
}$$

If maps between $D$-cones are defined properly, then limits can be characterized as terminal objects in 
the category of all $D$-cones.

The dual notions of cone, limit, terminal object, pullback (square), equalizer are {\em cocone},\index{cocone} {\em colimit},\index{colimit}
{\em initial object},\index{initial object} {\em pushout (square)},\index{pushout (square)} and {\em coequalizer},\index{coequalizer} respectively.

\subsection{Functors and Natural Transformations}\label{subsect:functor}
Let ${\bf C}$ and ${\bf D}$ be categories. A {\em functor}  $F:{\bf C}\rightarrow {\bf D}$  is a structure preserving function $F$ between ${\bf C}$ and ${\bf D}$, which maps ${\bf C}_i$ to ${\bf D}_i$: 
$F{\bf C}_i\subseteq {\bf D}_i$, $i =0, 1$,  such that
\begin{enumerate}
\item
for each $X\in {\bf C}_0$, $F1_X = 1_{FX}$;
\item
for each composable edge pair $(f,g)$ in ${\bf C}_1$, $(Ff,Fg)$ is a composable pair and $F(gf)=FgFf$.
\end{enumerate}

 A functor $F:{\bf C}\rightarrow {\bf D}$ is {\em full} ({\em faithful}) if each function
 $$F:\textrm{hom}_{\bf C}(X,Y)\rightarrow \textrm{hom}_{\bf D}(FX,FY),$$
 sending $f:X\rightarrow Y$ to $Ff:FX\rightarrow FY$, is surjective (injective) for all $X,Y\in C_0$.

Let $F,G:{\bf C}\rightarrow {\bf D}$ be two functors. 
A {\em natural transformation}
$\alpha$ from $F$ to $G$, written as $\alpha:F\rightarrow G$, is specified by
an operation which assigns each object $X$ of ${\bf C}$ 
a map $\alpha_X:FX\rightarrow GX$ such that for each $f:X\rightarrow Y$ in ${\bf C}_1$
$$\xymatrix{ 
{\scriptstyle FX} \ar[d]_{Ff} \ar[r]^{\alpha_X} & {\scriptstyle GX} \ar[d]^{Gf}\\
{\scriptstyle FY} \ar[r]^{\alpha_Y} & {\scriptstyle GY}
}$$
commutes in ${\bf D}$.
Natural transformations are maps between functors.
A natural transformation $\alpha$ is called {\em a natural isomorphism},
denoted by $\alpha:F\cong G$, if each {\em component} 
$\alpha_X$ is an isomorphism.

An {\em equivalence} between categories ${\bf C}$ and ${\bf D}$ 
is a pair of functors $S:{\bf C}\rightarrow {\bf D}$ and  
$T:{\bf D}\rightarrow {\bf C}$ together with natural isomorphisms
$1_{\bf C}\cong TS$ and $1_{\bf D}\cong ST$.

\subsection{Quotient Categories}\label{subsect:quotientcat}
Given nonempty category ${\bf C}$,  one may cluster the maps (edges) or objects (nodes) of ${\bf C}$ 
to obtain the new {\em quotient category} ${\bf C}/\rho$ 
with respect to an equivalence relation $\rho$ on maps or objects such that composition under $\rho$ is well defined.

Let $\rho$ be an equivalence relation on ${\bf C}_1$. $\rho$ is a {\em congruence}
if  for $f,g\in \textrm{hom}_{\bf C}(X,Y)$ such that $kfh$ and $kgh$ are composable and $(f,g)\in\rho$ imply 
$(kfh,kgh)\in\rho$.  {\em Quotient category} ${\bf C}/\rho$ is defined by
\begin{itemize}
\item
$({\bf C}/\rho)_0={\bf C}_0$,
\item
$({\bf C}/\rho)_1=\{[f]_{\rho}\;|\;f\in {\bf C}_1$, where $[f]_{\rho} : X\rightarrow Y$
as a representative member of the equivalence class of $f:X\rightarrow Y$ in ${\bf C}$,
\item
{\bf identities}: $[1_X]_{\rho}:X \rightarrow X$,
\item
{\bf composition}: $[g]_{\rho}[f]_{\rho}=[gf]_{\rho}$.
\end{itemize}

There is an obvious canonical functor
$$Q_{\rho}:{\bf C}\rightarrow {\bf C}/\rho$$
sending $f:X\rightarrow Y$ to $[f]_{\rho}:X\rightarrow Y$.
Each functor $F:{\bf C}\rightarrow {\bf D}$, which preserves the congruence equivalence relation $\rho$: $(f,g)\in \rho$ implies $Ff=Fg$,
factors through $Q_{\rho}$, followed by a unique functor $F_{\rho}:{\bf C}/\rho\rightarrow {\bf D}$
$$\xymatrix{
{\scriptstyle {\bf C}}\ar[rr]^F\ar[dr]_{Q_{\rho}} && {\scriptstyle {\bf D}}\\
& {\scriptstyle {\bf C}/\rho} \ar[ur]_{F_{\rho}}
}$$

Clustering the nodes (objects) of ${\bf C}$ by an equivalence relation, can be more complicated
since new composition on the equivalence classes of edges (maps) needs to be well defined.
However, one may first quotient the directed graph by identifying nodes then 
add all paths to the quotient directed graph to form a category.

\subsection{Presheaves, Representable Functors, and Yoneda Embedding}\label{subsect:yoneda}
A category is {\em small} if its objects and maps are sets and so it is {\bf Set}-enriched as the maps between each pair $(X,Y)$ of objects form a hom set $\textrm{hom}(X,Y)$. 
Let ${\bf  C}$ be s small category.

A {\em presheaf} on a small category ${\bf C}$ is a set valued functor $F:{\bf C}^{\textrm{op}}\rightarrow {\bf Set}$.
\begin{theorem}\label{thm:presheaves}
Given a small category ${\bf C}$, each category of presheaves on ${\bf C}$ is complete and cocomplete and both limits and colimits are computed point wise.
\end{theorem}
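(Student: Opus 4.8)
The plan is to verify the standard fact that the presheaf category $\mathbf{Set}^{\mathbf{C}^{\mathrm{op}}}$ inherits all limits and colimits from $\mathbf{Set}$, computed objectwise (pointwise). Since $\mathbf{Set}$ is itself complete and cocomplete---it has all small products, coproducts, equalizers, and coequalizers---the strategy is to transport this structure through the functor-category construction. Concretely, given any small index category $\mathbf{I}$ and a diagram $D\colon\mathbf{I}\to\mathbf{Set}^{\mathbf{C}^{\mathrm{op}}}$, I would first observe that each object $C\in\mathbf{C}_0$ gives an \emph{evaluation functor} $\mathrm{ev}_C\colon\mathbf{Set}^{\mathbf{C}^{\mathrm{op}}}\to\mathbf{Set}$ sending a presheaf $F$ to the set $F(C)$ and a natural transformation to its component at $C$. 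Composing, $\mathrm{ev}_C\circ D\colon\mathbf{I}\to\mathbf{Set}$ is an ordinary diagram of sets, which has a limit and a colimit since $\mathbf{Set}$ is complete and cocomplete.

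The key steps are then as follows. First I would \emph{define the candidate presheaf} $L$ objectwise by $L(C)=\lim(\mathrm{ev}_C\circ D)$, and extend $L$ to a functor $\mathbf{C}^{\mathrm{op}}\to\mathbf{Set}$: for each map $h\colon C'\to C$ in $\mathbf{C}$ the maps $D(i)(h)\colon D(i)(C)\to D(i)(C')$ assemble into a cone over $\mathrm{ev}_{C'}\circ D$, inducing by universality a unique map $L(C)\to L(C')$, and one checks functoriality from the uniqueness clause. Second, the limiting cones $(\phi^C_i\colon L(C)\to D(i)(C))_i$ are natural in $C$, so they assemble into a cone $\phi_i\colon L\to D(i)$ in the presheaf category. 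Third, I would \emph{verify the universal property}: given any presheaf cone $(\psi_i\colon M\to D(i))_i$, each component $(\psi_{i,C}\colon M(C)\to D(i)(C))_i$ is a cone in $\mathbf{Set}$, inducing a unique $u_C\colon M(C)\to L(C)$; one then checks that the family $(u_C)_C$ is natural---again using the uniqueness in the defining universal property of each $L(C)$---and that it is the unique mediating map. The colimit case is entirely dual, replacing $\lim$ by $\mathrm{colim}$ and reversing the relevant arrows, with $L(C)=\mathrm{colim}(\mathrm{ev}_C\circ D)$.

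The main obstacle, and the only step requiring genuine care, is establishing \emph{naturality} of the induced maps at the third step: one must confirm that the mediating map $u_C$ commutes with the presheaf action $h^*$ for every $h\colon C'\to C$, i.e.\ that the two paths in the relevant square agree. The clean way to dispatch this is to note that both composites $L(h)\circ u_C$ and $u_{C'}\circ M(h)$ are mediating maps $M(C)\to L(C')$ for one and the same cone over $\mathrm{ev}_{C'}\circ D$, so they coincide by the uniqueness half of the universal property in $\mathbf{Set}$; the same uniqueness argument also secures functoriality of $L$ and uniqueness of $u$. Thus every place where naturality or well-definedness could fail is resolved by invoking uniqueness of factorizations in $\mathbf{Set}$, and the ``pointwise'' assertion of the theorem is exactly the identification $L(C)=\lim(\mathrm{ev}_C\circ D)$ (and dually for colimits) that the construction produces by fiat.
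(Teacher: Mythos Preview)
Your argument is correct and is precisely the standard proof of this well-known result: define the candidate (co)limit objectwise, use uniqueness of factorizations in $\mathbf{Set}$ to secure functoriality and naturality, and verify the universal property componentwise. There is nothing to compare against here, however, since the paper states Theorem~\ref{thm:presheaves} without proof, treating it as a standard background fact from category theory (it is recorded in the appendix alongside other unproved foundational results and immediately applied to deduce the corollary about $\mathbf{Grph}$). Your write-up would serve perfectly well as the omitted proof.
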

Since ${\bf Grph}\cong {\bf Set}^{\bf 2}$, one has:
\begin{corollary}
${\bf Grph}$ has both limits and colimits, being computed point wise.
\end{corollary}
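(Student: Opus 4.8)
The plan is to deduce the corollary directly from Theorem~\ref{thm:presheaves} via the stated identification ${\bf Grph}\cong {\bf Set}^{\bf 2}$. First I would make this isomorphism explicit: let ${\bf 2}$ be the small category with two objects, say $E$ and $N$, and two parallel non-identity arrows $s,t:E\rightarrow N$ besides the identities. A functor $G:{\bf 2}\rightarrow {\bf Set}$ assigns a set $G(E)$ of edges, a set $G(N)$ of nodes, and two functions $Gs,Gt:G(E)\rightarrow G(N)$, which is exactly the data $(N,E)$ together with the $\textrm{from},\textrm{to}$ maps of a directed graph; a natural transformation between two such functors is precisely a graph homomorphism. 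Hence ${\bf Grph}$ is (isomorphic to) the functor category ${\bf Set}^{\bf 2}$, and since ${\bf 2}$ is small this is a presheaf category to which Theorem~\ref{thm:presheaves} applies, taking the indexing category to be ${\bf 2}$ (or its dual, as needed to match the contravariant presheaf convention).

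Next I would invoke Theorem~\ref{thm:presheaves} to conclude that ${\bf Set}^{\bf 2}$ is complete and cocomplete, with all limits and colimits computed pointwise, i.e. separately over the two objects $E$ and $N$. Concretely, for a diagram $D:{\bf I}\rightarrow {\bf Grph}$ the limit (respectively colimit) graph would have edge set the limit (respectively colimit) in ${\bf Set}$ of the edge sets $D(i)(E)$, node set the corresponding limit (respectively colimit) of the node sets $D(i)(N)$, and source and target maps induced as the unique mediating arrows supplied by the universal property of the pointwise (co)limits in ${\bf Set}$.

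Finally I would transport these facts along the isomorphism ${\bf Grph}\cong {\bf Set}^{\bf 2}$: an isomorphism of categories preserves and reflects all limits and colimits, so ${\bf Grph}$ is complete and cocomplete and its (co)limits are computed pointwise in the sense just described. The only point needing mild care — the closest thing to an obstacle — is bookkeeping the orientation of ${\bf 2}$ against the presheaf convention of Theorem~\ref{thm:presheaves}; this is harmless, since the parallel-pair category and its dual are both small and yield the same conclusion, and since \emph{pointwise} for a two-object indexing category means nothing more than ``on nodes and on edges.'' No genuinely new computation is required beyond recording that the resulting $\textrm{from}$ and $\textrm{to}$ maps are exactly the mediating maps furnished by the pointwise construction in ${\bf Set}$.
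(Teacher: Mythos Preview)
Your proposal is correct and follows essentially the same approach as the paper: the paper simply records ``Since ${\bf Grph}\cong {\bf Set}^{\bf 2}$, one has:'' immediately before the corollary, so the intended proof is exactly the one-line application of Theorem~\ref{thm:presheaves} you spell out. Your extra care about the covariant/contravariant orientation of ${\bf 2}$ is a reasonable remark but, as you note, immaterial here.
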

A set valued functor $S:{\bf C}^{\textrm{op}}\rightarrow {\bf Set}$ is {\em representable}  or {\em represented} by $A\in {\bf C}_0$ 
if it is naturally isomorphic to a hom functor $\textrm{hom}_{\bf C}(-,A)$ for $A\in {\bf C}_0$.

\begin{theorem}
Each presheaf is a colimit of representable set valued functors.
\end{theorem}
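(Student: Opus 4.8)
The plan is to realize each presheaf $F:{\bf C}^{\textrm{op}}\rightarrow {\bf Set}$ as the colimit of a canonical diagram of representables indexed by its \emph{category of elements}. First I would set up that index category $\int F$: its objects are pairs $(A,x)$ with $A\in {\bf C}_0$ and $x\in F(A)$, and a morphism $(A,x)\rightarrow (B,y)$ is a ${\bf C}$-map $f:A\rightarrow B$ satisfying $F(f)(y)=x$ (note $F(f):F(B)\rightarrow F(A)$ since $F$ is contravariant). Let $\pi:\int F\rightarrow {\bf C}$ be the projection $(A,x)\mapsto A$, let $Y:{\bf C}\rightarrow {\bf Set}^{{\bf C}^{\textrm{op}}}$ be the Yoneda embedding $A\mapsto \textrm{hom}_{\bf C}(-,A)$, and let $D=Y\circ\pi$ be the induced diagram into presheaves, so $D(A,x)=\textrm{hom}_{\bf C}(-,A)$ is representable. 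The goal is then to prove $F\cong\mathrm{colim}\,D$, which is exactly the assertion that $F$ is a colimit of representable set valued functors.

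Next I would exhibit the universal cocone using the Yoneda lemma. Each element $x\in F(A)$ corresponds under Yoneda to a natural transformation $\widehat{x}:\textrm{hom}_{\bf C}(-,A)\rightarrow F$, and I would check that the family $(\widehat{x})_{(A,x)}$ is a cocone from $D$ to $F$. For a morphism $f:(A,x)\rightarrow (B,y)$ in $\int F$ the required triangle commutes because naturality of the Yoneda correspondence in $A$ gives $\widehat{y}\circ Y(f)=\widehat{F(f)(y)}=\widehat{x}$, the last equality being the defining relation of the morphism $f$ in $\int F$.

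Then I would verify this cocone is universal by computing the colimit pointwise, which is legitimate because colimits in ${\bf Set}^{{\bf C}^{\textrm{op}}}$ are computed pointwise (Theorem \ref{thm:presheaves}). Fixing $B\in {\bf C}_0$, the candidate colimit is $\mathrm{colim}_{(A,x)}\textrm{hom}_{\bf C}(B,A)$, a colimit of sets whose elements are classes of pairs $((A,x),\,f:B\rightarrow A)$. I would define the comparison map to $F(B)$ by $((A,x),f)\mapsto F(f)(x)$ and argue it is a bijection: it is surjective since $((B,z),1_B)\mapsto z$ for every $z\in F(B)$, and it is injective because the datum $f:B\rightarrow A$ with $F(f)(x)=z$ is itself a morphism $(B,z)\rightarrow (A,x)$ in $\int F$, so every class $((A,x),f)$ is identified in the colimit with the representative $((B,z),1_B)$, and distinct $z$ give distinct representatives. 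These bijections assemble into a natural isomorphism $\mathrm{colim}\,D\cong F$, completing the argument.

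The main obstacle I anticipate is the bookkeeping in the pointwise colimit: one must track the equivalence relation generated by the morphisms of $\int F$, confirm that $((A,x),f)\mapsto F(f)(x)$ is well defined on equivalence classes, and check that the resulting family of bijections is natural in $B$. Everything else reduces to two careful invocations of the Yoneda lemma, once to produce the cocone and once to recognize its universality, together with the pointwise computation of colimits guaranteed by Theorem \ref{thm:presheaves}.
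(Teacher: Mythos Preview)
Your argument is the standard \emph{density theorem} (or \emph{co-Yoneda lemma}) proof via the category of elements, and it is correct as outlined; the only care points you flag---well-definedness of the map on equivalence classes and naturality in $B$---are routine once the cocone is in place.

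Note, however, that the paper states this theorem in the appendix as background material \emph{without proof}: there is no proof environment following the statement, and the text moves directly to the Yoneda lemma. So there is nothing in the paper to compare your approach against. Your write-up supplies exactly the argument one would expect here, and it correctly invokes the pointwise-colimit fact recorded as Theorem~\ref{thm:presheaves} just above the statement.
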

The Yoneda lemma states that natural transformations from a representable functor $\textrm{hom}_{\bf C}(-,X)$ to a set valued functor $S:{\bf C}^{\textrm{op}}\rightarrow {\bf Set}$
is in natural bijection with $SX$:

\begin{proposition} [Yonneda Lemma]
Let ${\bf C}$ be a small category, $S:{\bf C}^{\textrm{op}}\rightarrow {\bf Set}$ a functor, and $X\in {\bf C}_0$. Then
there is a natural isomorphism
$$\{\alpha:\textrm{hom}_{\bf C}(-,X) \rightarrow S\}\rightarrow SX.$$
\end{proposition}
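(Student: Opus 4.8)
The plan is to exhibit an explicit bijection and then verify that it is natural. First I would define the forward map
$$\Phi:\{\alpha:\textrm{hom}_{\bf C}(-,X)\rightarrow S\}\rightarrow SX$$
by evaluation at the identity: send a natural transformation $\alpha$ to the element $\Phi(\alpha)=\alpha_X(1_X)\in SX$, where $\alpha_X:\textrm{hom}_{\bf C}(X,X)\rightarrow SX$ is the component of $\alpha$ at $X$ and $1_X\in\textrm{hom}_{\bf C}(X,X)$. This is well defined, since $\alpha_X$ is a function and $1_X$ an element of its domain.

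Next I would produce a candidate inverse $\Psi:SX\rightarrow\{\alpha:\textrm{hom}_{\bf C}(-,X)\rightarrow S\}$. Given $s\in SX$, define $\Psi(s)$ to have, at each object $Y$, the component $\Psi(s)_Y:\textrm{hom}_{\bf C}(Y,X)\rightarrow SY$ given by $f\mapsto (Sf)(s)$; this is correctly typed because $f:Y\rightarrow X$ and $S$ is contravariant, so $Sf:SX\rightarrow SY$. I would then check that $\Psi(s)$ is genuinely a natural transformation by chasing the naturality square for an arbitrary map $g:Y'\rightarrow Y$ in ${\bf C}$: tracing $f:Y\rightarrow X$ around the square yields $S(fg)(s)$ on one route and $(Sg\circ Sf)(s)$ on the other, and these coincide by functoriality of $S$.

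The heart of the argument is that $\Phi$ and $\Psi$ are mutually inverse. One direction is immediate: $\Phi(\Psi(s))=\Psi(s)_X(1_X)=(S1_X)(s)=s$, since $S$ preserves identities. The reverse direction $\Psi(\Phi(\alpha))=\alpha$ carries the real content of the lemma. Writing $s=\alpha_X(1_X)$, I must show $\alpha_Y(f)=(Sf)(s)$ for every $f:Y\rightarrow X$. The key step is to apply the naturality square of $\alpha$ to the morphism $f$ itself: the square with top $\alpha_X$, left side precomposition by $f$, bottom $\alpha_Y$, and right side $Sf$ commutes, and chasing $1_X$ around it gives exactly $\alpha_Y(f)=(Sf)(\alpha_X(1_X))$. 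This is the one place where naturality of $\alpha$ is indispensable, and I expect it to be the main conceptual obstacle; everything else is bookkeeping.

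Finally, to justify the word \emph{natural} in the statement, I would verify that $\Phi$ is natural in $S$ and in $X$. For naturality in $S$, given $\theta:S\rightarrow S'$ I compare post-composition $\alpha\mapsto\theta\alpha$ against $\theta_X:SX\rightarrow S'X$; both routes send $\alpha$ to $\theta_X(\alpha_X(1_X))$. For naturality in $X$, given $h:X\rightarrow X'$ I use the induced map on the representable functors together with the action $Sh$, and an analogous chase closes the square. These are routine diagram chases once the bijection is in hand, so I would carry out one representative square and indicate that the others are identical.
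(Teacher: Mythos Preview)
Your argument is correct and is precisely the standard proof of the Yoneda Lemma: evaluate at the identity to go forward, act by $Sf$ on the chosen element to go back, and use naturality of $\alpha$ at $f$ (chasing $1_X$) for the nontrivial inverse direction; the naturality-in-$S$ and naturality-in-$X$ checks are the usual ones.

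The paper itself does not give a proof of this proposition. It is stated in the appendix as a standard background result, with no argument supplied. So there is nothing to compare against; your write-up simply fills in the omitted classical proof, and it does so correctly.
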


\begin{proposition} [Yonneda Embedding]
There is a full and faithful embedding 
$$Y :{\bf C}\rightarrow {\bf Set}^{{\bf C}^{\textrm{op}}},$$
taking $f:X\rightarrow Y$ to $\textrm{hom}_{\bf C}(-,f):\textrm{hom}_{\bf C}(-,X)\rightarrow \textrm{hom}_{\bf C}(-,Y)$, where $\textrm{hom}_{\bf C}(-,f)(e)=fe$.
\end{proposition}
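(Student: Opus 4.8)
The plan is to first confirm that $Y$ is a well-defined functor and then derive full and faithfulness directly from the Yoneda Lemma established above. For the functor check, I would verify that each $\textrm{hom}_{\bf C}(-,f)$ is genuinely a natural transformation: its naturality square at a map $h:Z'\rightarrow Z$ commutes because post-composition by $f$ and pre-composition by $h$ commute, i.e. $f(eh)=(fe)h$ by associativity. Functoriality of $Y$ itself is then immediate, $\textrm{hom}_{\bf C}(-,1_X)$ acts as $e\mapsto 1_X e=e$ so that $Y(1_X)=1_{YX}$, and $\textrm{hom}_{\bf C}(-,gf)(e)=(gf)e=g(fe)$ shows $Y(gf)=Y(g)Y(f)$, again by associativity.

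The heart of the argument is full and faithfulness, and here the plan is to invoke the Yoneda Lemma with target presheaf $S=\textrm{hom}_{\bf C}(-,Y)$. The lemma supplies a natural bijection
$$\{\alpha:\textrm{hom}_{\bf C}(-,X)\rightarrow \textrm{hom}_{\bf C}(-,Y)\}\;\cong\;\textrm{hom}_{\bf C}(X,Y),$$
and I would make explicit that this bijection is evaluation at the identity, $\alpha\mapsto \alpha_X(1_X)$. The key observation is that composing $Y$ with this bijection returns the original map: for $f:X\rightarrow Y$ we have $Y(f)_X(1_X)=\textrm{hom}_{\bf C}(-,f)_X(1_X)=f\,1_X=f$. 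Thus the Yoneda bijection is a left inverse to the function $Y:\textrm{hom}_{\bf C}(X,Y)\rightarrow \{\alpha:\textrm{hom}_{\bf C}(-,X)\rightarrow \textrm{hom}_{\bf C}(-,Y)\}$; since a left inverse that is itself a bijection forces $Y$ to be its two-sided inverse, $Y$ restricted to each hom-set is a bijection. This is precisely faithful (injectivity) and full (surjectivity) on maps.

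The main obstacle is not any single computation but making precise that the Yoneda bijection really is evaluation-at-$1_X$ and that it inverts $Y$; this is where the naturality of an arbitrary $\alpha$ is essential. Concretely, given any $\alpha:\textrm{hom}_{\bf C}(-,X)\rightarrow \textrm{hom}_{\bf C}(-,Y)$ and any $e:Z\rightarrow X$, naturality of $\alpha$ at the map $e$ (read in ${\bf C}^{\textrm{op}}$) applied to $1_X$ yields $\alpha_Z(e)=\alpha_X(1_X)\,e$, so every component of $\alpha$ is determined by $f:=\alpha_X(1_X)$ and $\alpha=Y(f)$. I would present this naturality step explicitly, since it is the content that powers the abstract bijection. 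Finally, to justify the word \emph{embedding}, I would note that $Y$ is injective on objects: an equality $\textrm{hom}_{\bf C}(-,X)=\textrm{hom}_{\bf C}(-,Y)$ forces $1_X\in\textrm{hom}_{\bf C}(X,Y)$, and since each map has a unique codomain this gives $X=Y$.
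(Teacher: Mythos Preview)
Your proposal is correct and is the standard derivation of the Yoneda embedding from the Yoneda Lemma. However, there is nothing to compare against: the paper states this proposition in its appendix as a background result and gives no proof at all. The same is true of the Yoneda Lemma just above it. So your proof is not merely ``the same approach''; it is the only proof on offer.

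One minor remark: your argument for injectivity on objects relies on each map having a well-defined codomain, which is fine in the paper's directed-graph setup (the functions \textrm{from} and \textrm{to} on edges are given), so $1_X\in\textrm{hom}_{\bf C}(X,Y)$ indeed forces $X=Y$. In settings where hom-sets need not be disjoint this step would require more care, but here it goes through.
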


Representing objects of a representable functor are unique up to isomorphic.
\begin{corollary}
$YA\cong YB$ if and only if $A\cong B$.
\end{corollary}

Morita theory shows that one can study a ring $R$ by investigating the category
of all $R$-modules, all module structures associated to $R$. Similarly, we study an ML element $E$ using the relations around $E$ by Yoneda Lemma.

\section{Adjoints and Monads}\label{sect:adjmonads}
\subsection{Adjoints}\label{subsect:adjoint}
Recall that an {\em adjunction} from {\bf C} to {\bf D} is a triple 
$\langle F,G,\varphi\rangle:{\bf C}\rightarrow {\bf D}$, where $F$ and $G$ are functors:
$$\xymatrix{
{\scriptstyle {\bf C}} \ar@<0.5ex>[r]^F & {\scriptstyle {\bf D}} \ar@<0.5ex>[l]^G
}$$
and $\varphi$ is a function which assigns to each pair of objects 
$C\in {\bf C}, D\in {\bf D}$ a bijection of sets
$$\varphi=\varphi _{C,D}:{\bf D}(FC,D)\cong {\bf C}(C,GD)$$

\noindent which is natural in $C$ and $D$:
$$\infer=[.]{C\rightarrow GD}{FC\rightarrow D}$$

If $F:{\bf C}\rightarrow {\bf D}$ is a functor and $D\in {\bf D}_0$,
a {\em universal arrow} from $D$ to $F$ is a pair
$(C,u)$ with $C\in {\bf C}_0$ and
$u:D\rightarrow FC$ being in ${\bf D}_1$ such that
for each pair $(C',f)$ with $C'\in {\bf C}_0$
and $f:D\rightarrow FC'\in {\bf D}_1$ there is a unique 
$f^*:C\rightarrow C'$ in ${\bf C}_1$ such that
$$\xymatrix{
{\scriptstyle D}\ar[dr]_{f} \ar[r]^u & {\scriptstyle FC}\ar[d]^{Ff^*} & {\scriptstyle C} \ar@{..>}[d]|{f^*}\\
& {\scriptstyle FC'} & {\scriptstyle C'} 
}$$
commutes.
Equivalently, $u:D\rightarrow FC$ is universal from $D$ to $F$
provided that the pair $(C,u)$ is an initial object 
in the comma category $(D\downarrow F)$ that has maps $D\rightarrow FC$ as its objects.

If $G:{\bf D}\rightarrow {\bf C}$ is a functor and $C\in {\bf C}_0$,
dually, a {\em universal arrow} from $G$ to $C$ is a pair
$(D,v)$ with $D\in {\bf D}_0$ and $v:GD\rightarrow C$ in ${\bf C}_1$
such that for each pair $(D',f)$ with $D'\in {\bf D}_0$
and $f:GD'\rightarrow C\in {\bf C}_1$ there is a unique 
$f^{\sharp}:D'\rightarrow D$ in ${\bf D}_1$ making
$$\xymatrix{
{\scriptstyle D'}\ar@{..>}[d]|{f^{\sharp}} & {\scriptstyle GD'} \ar[d]_{Gf^{\sharp}} \ar[dr]^f\\
{\scriptstyle D} & {\scriptstyle GD}\ar[r]^v & {\scriptstyle C}
}$$
commute.

By [\cite{Ma}, p.83, Theorem 2], each adjunction $\langle F,G,\varphi\rangle:{\bf C}\rightarrow 
{\bf D}$ is completely determined by
one of five conditions.
Here we only record some of them, which we shall use in this thesis:

\begin{description}
\item{(ii)} 
{\em The functor $G:{\bf D}\rightarrow {\bf C}$ and for each $C\in {\rm ob}({\bf C})$
a $F_0(C)\in {\rm ob}({\bf C})$ and a universal arrow 
$\eta_C:C\rightarrow GF_0(C)$ from $C$ to $G$. Then the functor $F$ has object function $F_0$
and is given by sending $f:C\rightarrow C'$ to
$GF(f)\eta_C=\eta_{C'}f$.}
\item{(iv)} 
{\em The functor $F:{\bf C}\rightarrow {\bf D}$ and for each $D\in {\rm ob}({\bf D})$
a $G_0(D)\in {\rm ob}({\bf C})$ and a universal arrow
$\varepsilon_D:FG_0(D)\rightarrow D$ from $F$ to $D$.}
\item{(v)} 
{\em Functors F, G and natural transformations 
$\eta: 1_{\bf C}\rightarrow GF$ and 
$\varepsilon: FG\rightarrow 1_{\bf D}$ such that $G\varepsilon \cdot \eta G=1_G$
and $\varepsilon F\cdot F\eta =1_F$.}
\end{description}

\noindent
Hence we often denote the adjunction $\langle F,G,\varphi\rangle:{\bf C}\rightarrow {\bf D}$ by
$(\eta,\varepsilon):F\dashv G:{\bf C}\rightarrow {\bf D}$
or by $\langle F,G,\eta,\varepsilon\rangle:{\bf C}\rightarrow {\bf D}$.
In this case, we say that $F$ is a {\em left adjoint}
to $G$ or $G$ is a {\em right adjoint} to
$F$ and that $F$ has a right adjoint $G$ and $G$ has a left adjoint $F$.
We also say that $F\dashv G$ is an {\em adjoint pair}.

Given a directed graph $G=(N,E)$, a category ${\bf C}$, one has
$$\infer=[]{\textrm{category functor }Path(G)\rightarrow {\bf C}}{\textrm{graph homomorphism }G\rightarrow U({\bf C})}$$
and so there is an adjunction:
$$\xymatrix{
{\scriptstyle {\bf Grph}} \ar@<0.6ex>[r]^{Path} & \ar@<0.6ex>[l]^{U}{\scriptstyle {\bf Cat}}.
}$$

\subsection{Monads}\label{subsect:monad}
In algebra, a {\em monoid} $M$ is a semigroup with an identity element. It may be viewed as a set with two operations: 
unit $\eta:1\rightarrow M$ and composition $\mu: M\times M\rightarrow M$ such that
$$\xymatrix{
{\scriptstyle M\times M\times M}\ar[rr]^{1\times \mu} \ar[d]_{\mu\times 1} && {\scriptstyle M\times M}\ar[d]^{\mu}\\
{\scriptstyle M\times M}\ar[rr]^{\mu} && {\scriptstyle M}\\
}$$
and 
$$\xymatrix{
{\scriptstyle 1\times M} \ar[rr]^{\eta\times 1} \ar[drr]_{\pi_2} && {\scriptstyle M\times M} \ar[d]_{\mu} && {\scriptstyle M\times 1}\ar[dll]^{\pi_1}\ar[ll]_{1\times \eta}\\
&& {\scriptstyle M}\\
}$$
are commutative, where the object 1 is the one-point set $\{0\}$,
the morphism 1 is an identity map, and where $\pi_1$ and
$\pi_2$ are projections.

\begin{definition}\label{def:monad}
A monad $T=\langle T,\eta,\mu\rangle$
on a category {\bf C} consists of an endo functor $T:{\bf C}\rightarrow {\bf C}$ and two
natural transformations
$$\eta:I\rightarrow T,\mu:T^2\rightarrow T$$
such that
$$\xymatrix{
{\scriptstyle T^3} \ar[r]^{T\mu}\ar[d]_{\mu T} & {\scriptstyle T^2}\ar[d]^{\mu}\\
{\scriptstyle T^2} \ar[r]^{\mu} & {\scriptstyle T}\\
}$$
and 
$$\xymatrix{
{\scriptstyle IT} \ar[r]^{\eta T}\ar[dr]_1 & {\scriptstyle T^2} \ar[d]|{\mu} & {\scriptstyle TI}\ar[l]_{T\eta} \ar[dl]^1\\
& {\scriptstyle T}\\
}$$
are commutative, where $I:{\bf C}\rightarrow {\bf C}$ is the identity functor.
\end{definition}

If $\langle F,G;\eta,\varepsilon\rangle:{\bf C}\rightarrow {\bf B}$ is an adjunction, then
\mbox{$\langle GF,\eta,G\varepsilon F\rangle$} is a monad on {\bf C} 
(see \cite{Ma}, p.138). In fact, every monad arises this way.

\begin{definition} 
Let $\langle T,\eta,\mu\rangle$ be a monad on ${\bf C}$,  {\em a $T$-algebra}
$\langle C,\xi\rangle$ is a pair consisting of an object $C\in {\bf C}$ and 
a map $\xi:TC\rightarrow C$ in ${\bf C}$ 
such that 
$$\xymatrix{ 
{\scriptstyle T^2C}\ar[r]^{T\xi} \ar[d]_{\mu_C} & {\scriptstyle TC}\ar[d]^{\xi}\\
{\scriptstyle TC}\ar[r]^{\xi} & {\scriptstyle C}
}$$
and
$$\xymatrix{ 
{\scriptstyle C} \ar[rd]_1 \ar[r]^{\eta _C} & {\scriptstyle TC} \ar[d]^{\xi}\\
& {\scriptstyle C}
}$$
are commutative. {\em A map} 
$f:\langle C,\xi\rangle\rightarrow \langle D,\zeta\rangle$ of $T$-algebras
is a map $f:C\rightarrow D$ of ${\bf C}$ such that
$$\xymatrix{ 
{\scriptstyle TC}\ar[d]_{\xi}\ar[r]^{Tf} & {\scriptstyle TD}\ar[d]^{\zeta}\\
{\scriptstyle C}\ar[r]^f & {\scriptstyle D}
}$$
commutes.
\end{definition}

Every monad is determined by its $T$-algebras, as specified by the following
theorem:

\begin{theorem}
If $\langle T,\eta,\mu\rangle$ is a monad in {\bf C}, then all $T$-algebras
and their maps form a category ${\bf C}^T$,
called the {\em Eilenberg-Moore category} of the monad $T$ over the category
${\bf C}$. There is an adjunction
$$\langle F^T,G^T;\eta^T,\varepsilon^T\rangle:{\bf C}\rightarrow {\bf C}^T,$$
where $G^T:{\bf C}^T\rightarrow {\bf C}$ is the obvious forgetful functor
and $F^T$ is given by
$$\xymatrix{ 
\ar @{} [drr]|{\mapsto}
{\scriptstyle C} \ar[d]_{f}  \ar @{} [rr]|{\mapsto} & & {\scriptstyle \langle T(C),\mu_C\rangle} \ar[d]^{T(f)}\\
            {\scriptstyle D} \ar @{} [rr]|{\mapsto} & & {\scriptstyle \langle T(D),\mu_{D}\rangle}
}$$
Furthermore, $\eta^T=\eta$ and $\varepsilon^T_{\langle C,\xi\rangle}=\xi$ for each 
$T$-algebra $\langle C,\xi\rangle$. The monad
defined in {\bf C} by this adjunction is $\langle T,\eta,\mu\rangle$.
\end{theorem}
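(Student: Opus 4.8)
The plan is to establish the four assertions in turn, reading the whole statement as the Eilenberg--Moore construction and invoking characterization (v) of an adjunction from Subsection~\ref{subsect:adjoint}. First I would confirm that ${\bf C}^T$ is a category. The identity on an algebra $\langle C,\xi\rangle$ is $1_C$, which is an algebra map since $\xi\circ T(1_C)=\xi=1_C\circ\xi$; the composite of two algebra maps is again one, by pasting their two defining squares vertically; and associativity together with the unit laws are inherited directly from ${\bf C}$.

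Next I would check that $F^T$ and $G^T$ are functors. The content in $F^T$ is that $\langle T(C),\mu_C\rangle$ genuinely is a $T$-algebra: its associativity square $\mu_C\circ T\mu_C=\mu_C\circ\mu_{TC}$ is exactly the monad associativity axiom at $C$, and its unit law $\mu_C\circ\eta_{TC}=1_{TC}$ is the left triangle of the monad. That $F^T(f)=T(f)$ is an algebra map $\langle T(C),\mu_C\rangle\to\langle T(D),\mu_D\rangle$ is precisely the naturality square $T(f)\circ\mu_C=\mu_D\circ T^2(f)$ of $\mu$ at $f$, and functoriality of $F^T$ is inherited from $T$. The forgetful functor $G^T$ is clearly a functor, and one reads off the equality of endofunctors $G^TF^T=T$ on ${\bf C}$.

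Then I would assemble the adjunction through condition (v). Since $G^TF^T=T$, setting $\eta^T:=\eta\colon 1_{\bf C}\to T$ gives a natural transformation $1_{\bf C}\to G^TF^T$. Setting $\varepsilon^T_{\langle C,\xi\rangle}:=\xi$ defines a map $F^TG^T\langle C,\xi\rangle=\langle T(C),\mu_C\rangle\to\langle C,\xi\rangle$; this $\xi$ is an algebra morphism exactly because of the algebra associativity square $\xi\circ\mu_C=\xi\circ T(\xi)$, and the naturality of $\varepsilon^T$ in $\langle C,\xi\rangle$ is nothing but the defining square $f\circ\xi=\zeta\circ T(f)$ of an algebra map. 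The two triangle identities then reduce to pointwise equalities: $G^T\varepsilon^T_{\langle C,\xi\rangle}\circ\eta^T_C=\xi\circ\eta_C=1_C$ is the algebra unit law, while $\varepsilon^T_{F^TC}\circ F^T\eta^T_C=\mu_C\circ T(\eta_C)=1_{TC}$ is the right triangle of the monad. This produces $\langle F^T,G^T;\eta^T,\varepsilon^T\rangle$ and, by construction, the stated formulas $\eta^T=\eta$ and $\varepsilon^T_{\langle C,\xi\rangle}=\xi$.

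Finally I would recover the monad: the one induced by $F^T\dashv G^T$ is $\langle G^TF^T,\eta^T,G^T\varepsilon^TF^T\rangle$, and here $G^TF^T=T$, $\eta^T=\eta$, and $G^T\varepsilon^T_{F^TC}=G^T(\mu_C)=\mu_C$, so the induced monad is exactly $\langle T,\eta,\mu\rangle$. The entire argument is a chain of short diagram chases, so the only real obstacle is bookkeeping---tracking, at each step, precisely which monad axiom or which algebra axiom is being used. The two steps that carry genuine content are the verification that $F^T$ lands in $T$-algebras (monad associativity) and that $\varepsilon^T$ is a morphism of $T$-algebras (algebra associativity); everything else is unit laws and naturality.
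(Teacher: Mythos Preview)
Your argument is correct and is essentially the standard textbook verification. The paper itself does not give a proof at all but simply refers the reader to Mac Lane \cite{Ma}, pp.~140--141; what you have written is precisely the diagram-chase argument one finds there, organized around characterization (v) of an adjunction. So your proposal is not so much a different route as the route the paper declines to spell out, and there is nothing to add.
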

\begin{proof}
See \cite{Ma}, pp.140-141.
\end{proof} 

Some examples $T$-algebras are as follows.
\begin{example} 
\begin{enumerate}
\item
{\em Complete Lattices.}
The ${\cal P}$-algebras are free complete lattice:
$(X,\xi:{\cal P}X\rightarrow X)$ where $\xi(S)$ is the supremum of $S\subseteq X$
and the the maps are maps which preserve arbitrary suprema.
\item
{\em Modules.} Let $R$ be a unital ring. Then
$$T_R(A)=A\otimes R, \eta_A:A\rightarrow A\otimes R:a\mapsto a\otimes 1$$
and 
$$\mu_A:(A\otimes R)\otimes R\rightarrow A\otimes R:(a\otimes r_1)\otimes 
r_2\mapsto a\otimes (r_1r_2)$$
for every abelian group $A$,
give a monad $(T_R,\eta,\mu)$ on the category {\bf Ab} of all 
abelian groups, and ${\bf Ab}^{T_R}$ is the category \mbox{{\bf Mod}-$R$} of 
right  \mbox{$R$-modules}.
\item
$Upath: {\bf Grph}\rightarrow {\bf Grph}$ is a monad and ${\bf Grph}^{Upath}\cong {\bf Cat}.$
\item
{\em Group Actions.} Let $G$ be a group. Then ${\bf Set}^{T_G}$ is the 
category ${\bf Set}^G$ of $G$-sets, where the monad  
\mbox{$\langle T_G,\eta,\mu\rangle$} on {\bf Set} is defined by
$$T_G(X)=G\times X,\eta_X:X\rightarrow G\times X:x\mapsto (1_G,x),$$
and
$$\mu_X:G\times (G\times X)\rightarrow G\times X:(g_1,(g_2,x))
\mapsto (g_1g_2,x).$$
\end{enumerate}
\end{example}

More generally, every variety of universal algebra is the category of 
$T$-algebras over {\bf Set}, where $TX$ is the the underlying set of the free
algebras over $X$.

\begin{theorem}[Beck's Theorem, comparison of adjunctions with algebras]\label{theorem:comp}
Let $\langle F,G;\eta,\varepsilon\rangle:{\bf C}\rightarrow {\bf B}$ be
an adjunction and \mbox{$T=\langle GF,\eta,G\varepsilon F\rangle$} 
the induced monad. Then there is a unique functor $K: {\bf B}\rightarrow {\bf C}^T$ given by
$$KB=\langle GB,G\varepsilon_B\rangle,Kf=Gf:\langle GB,G\varepsilon_B\rangle\rightarrow
\langle GB',G\varepsilon_{B'}\rangle$$
such that $G^TK=G$ and $KF=F^T:$
$$\xymatrix{ 
{\scriptstyle {\bf B}}\ar[rr]^K\ar@<.5ex>[dr]^G & & 
{\scriptstyle {\bf C}^T}\ar@<.5ex>[dl]^{G^T}\\
& {\scriptstyle {\bf C}} \ar@<.5ex>[lu]^F \ar@<.5ex>[ur]^{F^T} &
}$$
\end{theorem}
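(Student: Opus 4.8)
The plan is to establish the three assertions of the theorem in turn: (a) that $K$ is a well-defined functor, i.e.\ that each $KB=\langle GB,G\varepsilon_B\rangle$ is genuinely a $T$-algebra and each $Kf=Gf$ a genuine algebra morphism; (b) that the two strictly commuting identities $G^TK=G$ and $KF=F^T$ hold; and (c) that $K$ is the unique functor with these properties. Throughout I would use only the naturality of the counit $\varepsilon$ and the two triangle identities $G\varepsilon\cdot\eta G=1_G$ and $\varepsilon F\cdot F\eta=1_F$ from condition (v) of the adjunction, together with the fact that here $T=GF$ and $\mu_C=G\varepsilon_{FC}$.

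First I would check the algebra axioms for $\langle GB,G\varepsilon_B\rangle$. The unit axiom $G\varepsilon_B\cdot\eta_{GB}=1_{GB}$ is precisely the triangle identity $G\varepsilon\cdot\eta G=1_G$ evaluated at $B$. The associativity axiom asks for $G\varepsilon_B\cdot GFG\varepsilon_B=G\varepsilon_B\cdot G\varepsilon_{FGB}$, which I obtain by applying $G$ to the naturality square of $\varepsilon$ at the morphism $\varepsilon_B\colon FGB\to B$, namely $\varepsilon_B\cdot FG\varepsilon_B=\varepsilon_B\cdot\varepsilon_{FGB}$. That $Gf$ is an algebra morphism $\langle GB,G\varepsilon_B\rangle\to\langle GB',G\varepsilon_{B'}\rangle$ is again naturality of $\varepsilon$, now at $f\colon B\to B'$, giving $Gf\cdot G\varepsilon_B=G\varepsilon_{B'}\cdot GFGf$; functoriality of $K$ then follows immediately from that of $G$. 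The identities $G^TK=G$ and $KF=F^T$ I expect to be essentially definitional: since $G^T$ merely forgets the structure map, $G^TKB=GB$ and $G^TKf=Gf$, while $KF(C)=\langle GFC,G\varepsilon_{FC}\rangle=\langle TC,\mu_C\rangle=F^TC$ and $KF(f)=GFf=Tf=F^Tf$.

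The real work, and the step I expect to be the main obstacle, is uniqueness. Suppose $K'$ also satisfies $G^TK'=G$ and $K'F=F^T$. Then $G^TK'=G$ forces $K'B=\langle GB,\xi_B\rangle$ for some structure map $\xi_B\colon TGB\to GB$, and I must show $\xi_B=G\varepsilon_B$. The key idea is to apply $K'$ to the counit component $\varepsilon_B\colon FGB\to B$: by $K'F=F^T$ its source is the free algebra $K'FGB=F^T(GB)=\langle TGB,\mu_{GB}\rangle$, and by $G^TK'=G$ its underlying arrow is $G\varepsilon_B$. Reading off the algebra-morphism square for $K'\varepsilon_B$ gives $\xi_B\cdot T(G\varepsilon_B)=G\varepsilon_B\cdot\mu_{GB}$, and the associativity computation above rewrites the right-hand side as $G\varepsilon_B\cdot T(G\varepsilon_B)$; thus $\xi_B$ and $G\varepsilon_B$ agree after precomposition with $T(G\varepsilon_B)$.

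To cancel $T(G\varepsilon_B)$ I would observe that it is a split epimorphism: applying $GF=T$ to the triangle identity $G\varepsilon_B\cdot\eta_{GB}=1_{GB}$ yields $T(G\varepsilon_B)\cdot T(\eta_{GB})=1_{TGB}$, so $T(\eta_{GB})$ is a section. Cancelling $T(G\varepsilon_B)$ then gives $\xi_B=G\varepsilon_B$, whence $K'B=KB$ on objects; since the forgetful functor $G^T$ is faithful and $G^TK'=G^TK$, this forces $K'=K$ on morphisms as well. The only genuinely delicate point is recognizing that the counit is necessarily sent to the algebra structure map and that the arrow to be cancelled is split epic—everything else is routine bookkeeping with naturality of $\varepsilon$ and the triangle identities.
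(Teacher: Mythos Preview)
Your proof is correct. The paper does not supply its own argument here but simply refers the reader to Mac~Lane \cite{Ma}, pp.~142--143; what you have written is essentially the standard proof found there, so in content you match the paper's cited source.

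One small streamlining is available in your uniqueness step. Having derived the algebra-morphism square
\[
\xi_B\cdot T(G\varepsilon_B)=G\varepsilon_B\cdot\mu_{GB},
\]
you can precompose both sides directly with $T\eta_{GB}$ rather than first rewriting the right-hand side via associativity and then cancelling the split epimorphism. On the left, $\xi_B\cdot T(G\varepsilon_B)\cdot T\eta_{GB}=\xi_B\cdot T(G\varepsilon_B\cdot\eta_{GB})=\xi_B$ by the triangle identity; on the right, $G\varepsilon_B\cdot\mu_{GB}\cdot T\eta_{GB}=G\varepsilon_B$ by the monad unit law $\mu\cdot T\eta=1_T$. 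This yields $\xi_B=G\varepsilon_B$ in one line. Your route through the split epimorphism is of course equally valid---it is just a slightly longer path to the same conclusion.
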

\begin{proof}
See \cite{Ma}, pp.142-143.
\end{proof} 

\begin{definition}
$G$ is {\em monadic (premonadic)} if the comparison functor $K$, defined in Beck's Theorem, is an equivalence of categories (full and faithful). 
\end{definition}

\subsection{Descent and Change of Base}\label{subsect:descent}
Descent theory plays an important role in the development of modern algebraic
geometry by Grothendieck \cite{Gr1,Gr2}. Generally
speaking, it deals with the problem of which morphisms in a given “structured”
category allow for change of base under minimal loss of information, and how to
compensate for the occurring loss, such morphisms are called 
effective descent morphisms. 

Let ${\bf C}$ be a category with pullbacks, and let $p : E \rightarrow B$ be a morphism in ${\bf C}$.
Then we have the following adjoint pair:
$$\xymatrix{ 
{\scriptstyle {\bf C}/B} \ar@<-0.5ex>[r]_{p^*} & \ar@<-0.5ex>[l]_{p_!}{\scriptstyle {\bf C}/E}
}$$
where $p_!(D, s) = ps$, $p^*(C, r) = \pi_1$ which is given by the following pullback:
$$\xymatrix{ 
{\scriptstyle E\times_BC} \ar[r]^{\pi_2} \ar[d]_{\pi_1} & {\scriptstyle C}\ar[d]^r\\
{\scriptstyle E}\ar[r]^p & {\scriptstyle B}
}$$
 The unit and counit of $p_!\dashv p^*$ is given by $\eta (s:C\rightarrow E) =\langle s,1_C \rangle: C \rightarrow E \times_B C$
 and $\varepsilon (r:C\rightarrow B) = \pi_2$, respectively.

Applying Beck's Theorem to the adjunction $p_!\dashv p^*:{\bf C}/E\rightarrow {\bf C}/B$, one has the following commutative diagram
$$\xymatrix{ 
{\scriptstyle {\bf C}/B}\ar[rr]^K\ar@<.5ex>[dr]^{p^*} & & 
{\scriptstyle ({\bf C}/E)^T}\ar@<.5ex>[dl]^{(p^*)^T}\\
& {\scriptstyle {\bf C}/E} \ar@<.5ex>[lu]^{p_!} \ar@<.5ex>[ur]^{(p_!)^T} &
}$$
where $T=p^*p_!$.

\begin{definition}
$p:E\rightarrow B$ is {\em effective descent (descent)} if the comparison functor $K$, defined in Beck's Theorem, is an equivalence of categories (full and faithful). 
\end{definition}

Dually, one has the dual of the above change of base.
\end{document}